\documentclass[11pt]{article} % For LaTeX2e
\usepackage{url}
\usepackage{smile}
\usepackage{graphicx} % more modern
\usepackage{algorithm}
\usepackage{algorithmic}
\usepackage{todonotes}
\usepackage{epstopdf}
\usepackage{wrapfig}
\usepackage[colorlinks, linkcolor=blue, anchorcolor=blue, citecolor=blue]{hyperref}
\usepackage[margin=1in]{geometry}
\usepackage[normalem]{ulem}
\usepackage[export]{adjustbox}% http://ctan.org/pkg/adjustbox
\usepackage{mathtools, cuted}
\usepackage{natbib}
\usepackage{enumerate}
\usepackage{microtype}
\usepackage{graphicx}
\usepackage{booktabs} % for professional tables
\usepackage{smile}
\usepackage{wrapfig}

\linespread{1.15}
%\usepackage{setspace}
%\doublespacing

\usepackage{kpfonts}
\DeclareMathAlphabet{\mathsf}{OT1}{cmss}{m}{n}
\SetMathAlphabet{\mathsf}{bold}{OT1}{cmss}{bx}{n}

%\numberwithin{equation}{section}
%\numberwithin{theorem}{section}
%\numberwithin{figure}{section}
%\numberwithin{figure}{section}

%\addtolength{\textwidth}{1in} \addtolength{\oddsidemargin}{-0.5in}
%\addtolength{\textheight}{1in} \addtolength{\topmargin}{-0.62in}
%\renewcommand{\baselinestretch}{1.15}

%\pdfminorversion=4

\begin{document}

\title{\huge A Fast Proximal Point Method for Computing Exact Wasserstein Distance}

\author{Yujia Xie, Xiangfeng Wang, Ruijia Wang, Hongyuan Zha \thanks{Yujia Xie, Ruijia Wang and Hongyuan Zha  are affiliated with College of Computing at Georgia Institute of Technology. Xiangfeng Wang is affiliated with East China Normal University. Emails: {\tt Xie.Yujia000@gmail.com, xfwang@sei.ecnu.edu.cn, rwang@gatech.edu, zha@cc.gatech.edu}.}}
\date{}

\maketitle

\begin{abstract}

%Wasserstein distance plays increasingly important roles in machine learning, stochastic programming and image processing. Major efforts have been under way to address its high computational complexity, some leading to approximate or regularized variations such as Sinkhorn distance. However, as we will demonstrate, regularized variations with large regularization parameter will degradate the performance in several important machine learning applications, and small regularization parameter will fail due to numerical stability issues with existing algorithms. We address this challenge by developing an Inexact Proximal point method for Optimal Transport (IPOT) with the proximal operator approximately evaluated at each iteration using projections to the probability simplex. 
% The algorithm converges to exact Wasserstein distance with complexity comparable to Sinkhorn.
% We also explore the possibility to use IPOT in Wasserstein generative adversarial networks (WGAN), and generalize the idea of IPOT to a new method for computing Wasserstein barycenter. 
%We provide convergence analysis of IPOT and experiments showing our new methods outperform the state-of-the-art methods in terms of both effectiveness and efficiency.

Wasserstein distance plays increasingly important roles in machine learning, stochastic programming and image processing. 
Major efforts have been under way to address its high computational complexity, some leading to approximate or regularized variations such as Sinkhorn distance. 
However, as we will demonstrate, regularized variations with large regularization parameter will degradate the performance in several important machine learning applications, and small regularization parameter will fail due to numerical stability issues with existing algorithms. 
We address this challenge by developing an Inexact Proximal point method for exact Optimal Transport problem (IPOT) with the proximal operator approximately evaluated at each iteration using projections to the probability simplex. 
The algorithm (a) converges to exact Wasserstein distance with theoretical guarantee and robust regularization parameter selection, (b) alleviates numerical stability issue, (c) has similar computational complexity to Sinkhorn, and (d) avoids the shrinking problem when apply to generative models.
Furthermore, a new algorithm is proposed based on IPOT to obtain sharper Wasserstein barycenter.

\end{abstract}

\vspace{-5pt}
\section{INTRODUCTION}
\label{section1}

\vspace{-5pt}

Many practical tasks in machine learning rely on computing a Wasserstein distance between probability measures or between their sample points~\citep{arjovsky2017wasserstein,thorpe2017transportation,ye2017fast,srivastava2015wasp}. However, the high computational cost of Wasserstein distance has been a thorny issue and has limited its application to challenging machine learning problems.

In this paper we focus on Wasserstein distance for discrete distributions the computation of which amounts to solving the following discrete \textit{optimal transport} (OT) problem, 
\begin{eqnarray}\label{eq:target1}
\begin{aligned}
W(\bm{\mu},\bm{\nu}) = \sideset{}{_{\bm{\Gamma}\in\Sigma(\bm{\mu},\bm{\nu})}}\min \langle\bm{C},\bm{\Gamma}\rangle.
\end{aligned}
\end{eqnarray}
Here $\bm{\mu},\bm{\nu}$ are two probability vectors, $W(\bm{\mu},\bm{\nu})$ is the Wasserstein distance between $\bm{\mu}$ and $\bm{\nu}$. 
Matrix $\bm{C}=[c_{ij}]\in\mathbb{R}_{+}^{m\times n}$ is the \textit{cost matrix}, whose element $c_{ij}$ represents the distance between the $i$-th support point of $\bm{\mu}$ and the $j$-th one of $\bm{\nu}$. The optimal solution $\bm{\Gamma}^*$ is referred as \textit{optimal transport plan}.
Notation $\langle\cdot,\cdot\rangle$ represents the Frobenius dot-product and $\Sigma(\bm{\mu},\bm{\nu})=\{\bm{\Gamma}\in\mathbb{R}_{+}^{m\times n}: \bm{\Gamma}\bm{1}_m=\bm{\mu}, \bm{\Gamma}^{\top}\bm{1}_n=\bm{\nu}\}$, where $\bm{1}_n$ represents $n$-dimensional vector of ones. This is a linear programming problem with typical super $O(n^3)$ complexity\footnote{Assume $O(n)=O(m)$.}.

An effort by Cuturi to reduce the complexity leads to a regularized variation of  (\ref{eq:target1}) giving rise the so-called Sinkhorn distance~\citep{cuturi2013sinkhorn}. It  aims to solve an entropy regularized optimal transport problem
\begin{eqnarray}\label{eq:sinkhorn}
\begin{aligned}
W_{\epsilon}(\bm{\mu},\bm{\nu}) = \sideset{}{_{\bm{\Gamma}\in\Sigma(\bm{\mu},\bm{\nu})}}\min \langle\bm{C},\bm{\Gamma}\rangle+\epsilon h(\bm{\Gamma}).
\end{aligned}
\end{eqnarray}
The entropic regularizer $h(\bm{\Gamma})=\sum_{i,j}\Gamma_{ij}\ln\Gamma_{ij}$ results in an 
optimization problem (\ref{eq:sinkhorn}) that can be solved efficiently by iterative Bregman projections~\citep{benamou2015iterative},
 \[
 \bm{a}^{(l+1)} = \frac{\pmb{\mu}}{\bm{Gb}^{(l)}}, \quad \bm{b}^{(l+1)} = \frac{\pmb{\nu}}{\bm{G}^T \bm{a}^{(l+1)}}
 \]
starting from $b^{(0)} = \frac{1}{n}\pmb{1}_n$, where $\bm{G}=[G_{ij}]$ and $ G_{ij} = e^{-C_{ij}/\epsilon}$. The optimal solution $\bm{\Gamma}^*$ then takes the form $\Gamma^*_{ij} = a_i G_{ij} b_j$.
The iteration is also referred as \textit{Sinkhorn iteration}, and the method is referred as \textit{Sinkhorn algorithm} which, recently, is proven to achieve a  near-$O(n^2)$ complexity ~\citep{altschuler2017near}.

%Comparing (\ref{eq:target1}) and (\ref{eq:sinkhorn}), we know the solution of Sinkhorn algorithm approaches to the exact solution of optimal transport only if $\epsilon \to 0$. In reality, however, 
The choice of $\epsilon$ cannot be arbitrarily small. 
% The choice of $\epsilon$ has two numerical implications.
%The choice of $\epsilon$ has two distinct numerical implications.
Firstly, $ G_{ij} = e^{-C_{ij}/\epsilon}$ tends to underflow if $\epsilon$ is very small.
The methods in~\citet{benamou2015iterative,chizat2016scaling,mandad2017variance} try to address this numerical instability by performing the computation in log-space, but they require a significant amount of extra exponential and logarithmic operations, and thus,  compromise the advantage of efficiency.
%reducing the efficiency.
%This instability is addressed by ~\cite{chizat2016scaling,benamou2015iterative} by performing the compution in log-space, but the remedy requires a significant amount of exponent and logarithm operation, which lowers the efficiency. 
%On the other hand, despite that the convergence of Sinkhorn is  linear, the convergence rate is bounded by \textit{contraction ratio} $\kappa(\bm{G})$, and $\kappa(\bm{G})\to 1$ as $\epsilon \to 0$~\cite{franklin1989scaling}. In tests, we also observe drastically increasing requirement for iteration number when $\epsilon\to 0$, which is in consistent with theoretical analysis.
More significantly, even with the benefits of log-space computation, the linear convergence rate of the Sinkhorn algorithm is determined by the \textit{contraction ratio} $\kappa(\bm{G})$, which approaches $1$ as $\epsilon \to 0$~\citep{franklin1989scaling}.
Consequently, we observe drastically increased number of iterations for Sinkhorn method when using small $\epsilon$.

Can we just employ Sinkhorn distance with a moderately sized $\epsilon$ for machine learning problems so that we can get the benefits of the reduced complexity? Some applications show Sinkhorn distance can generate good results with a moderately sized $\epsilon$~\citep{genevay2017sinkhorn,martinez2016relaxed}. However, we show that in several important problems such as generative model learning and Wasserstein barycenter computation, a moderately sized $\epsilon$ will significantly degrade the performance while the Sinkhorn algorithm with a very small $\epsilon$ becomes prohibitively expensive (also shown in \citet{solomon2014wasserstein}). 
%More significantly, the use of IPOT in generative model learning has a critical positive ripple effect in  reducing the overall complexity, further demonstrating its superiority over Sinkhorn distance. 

In this paper, we propose a new framework, Inexact Proximal point method for Optimal Transport (IPOT) to compute the Wasserstein distance using generalized proximal point iterations based on Bregman divergence. To enhance efficiency, the proximal operator is inexactly evaluated at each iteration using projections to the probability simplex, leading to an \textbf{inexact} update at each iteration yet converging to the \textbf{exact} optimal transport solution.

Regarding the theoretical analysis of IPOT, we provide conditions on the number of inner iterations that will guarantee the linear convergence of IPOT. In fact, empirically, IPOT behaves better than the analysis: the algorithm seems to be linearly convergent with just one inner iteration, demonstrating its efficiency. We also perform several other tests to show the excellent performance of IPOT. As we will discussed in Section \ref{section42}, the computation complexity is almost indistinguishable comparing to the Sinkhorn method. Yet again, IPOT avoids the lengthy and experience-based tuning of the $\epsilon$ and can converges to the true optimal transport solution robustly with respect to its own parameters. This is unquestionably important in applications where the exact sparse transport plan is preferred. In applications where only Wasserstein distance is needed, the bias caused by regularization might also be problematic. As an example, when applying Sinkhorn to generative model learning, it causes the shrinkage of the learned distribution towards the mean, and therefore cannot cover the whole support of the target distribution adequately. 

Furthermore, we develop another new algorithm based on the proposed IPOT to compute Wasserstein barycenter (see Section \ref{section5}). Better performance is obtained with much sharper images. It turns out that the inexact evaluation of the proximal operator blends well with Sinkhorn-like barycenter iteration.

%Regarding IPOT, we carefully analyze its convergence behavior both in terms of theoretical analysis and empirical experiments. We also provide conditions on the number of inner iterations that will guarantee the linear convergence of IPOT. In fact, empirically, IPOT behaves better than our theoretical analysis: the algorithm seems to be linearly convergent with just one inner iteration, demonstrating its efficiency.
%%%%%%%%%%%%%%%%%%%%%%%%%%%%%%%%%%%%%%%%%%%%%%%%%%%%%%%%%%%%%%%%%%%%%%%%%%
\vspace{-5pt}
\section{PRELIMINARIES}
\label{section2}

\vspace{-5pt}

We then provide some background on optimal transport and proximal point method.

%\textbf{Notations} We use bold lowercase letters to represent vectors, and the same non-bold lowercase letters with subscripts to represent entries of vectors. We use bold uppercase letters to represent matrices, and entries in the same way as vectors.

\subsection{Wasserstein Distance and Optimal Transport}
\label{section21}

\vspace{-5pt}

Wasserstein distance is a metric for two probability distributions. 
%Mathematically, let $\mu$ and $\nu$ be two probability measures on a Radon space $(\mathcal{M}, d)$ with finite $p$-th moment ($p\geq 1$) for some points in $\mathcal{M}$. 
Given two distributions $\mu$ and $\nu$, the $p$-Wasserstein distance between them is defined as
\begin{eqnarray}\label{eq:wd_c}
\begin{aligned}
W_p(\mu,\nu):=\Big\{\inf_{\gamma\in\Sigma(\mu,\nu)}\int_{\mathcal{M}\times\mathcal{M}}d^p(x,y)\text{d}\gamma(x,y)\Big\}^{\frac{1}{p}},
\end{aligned}
\end{eqnarray}
where $\Sigma(\mu,\nu)$ is the set of joint distributions whose marginals are $\mu$ and $\nu$, respectively. The above optimization problem is also called the Monge-Kantorovitch problem or \textit{optimal transport} problem~\citep{kantorovich1942mass}. In the following, we focus on the $2$-Wasserstein distance, and for convenience we write $W(\cdot,\cdot)=W_2^2(\cdot,\cdot)$. 

% In particular, let $\mu$ and $\nu$ be two probability measures on $(\Gamma^{\mu},\mathcal{B}(\Gamma^{\mu}))$ and $(\Gamma^{\nu},\mathcal{B}(\Gamma^{\nu}))$, respectively, where $\mathcal{B}(\Gamma))$ is the Borel algebra on $\Gamma$. We say that a probability $\gamma$ on $(\Gamma^{\mu}\times \Gamma^{\nu},\mathcal{B}(\Gamma^{\mu}\times \Gamma^{\nu}))$ is a solution of Monge-Kantorovitch Problem~\cite{kantorovich1942mass} associated to the couple $(\mu,\nu)$ and to the cost $c$ if the first marginal of $\gamma$ is $\mu$, and the second one is $\nu$ and if 
% \begin{equation}\label{MKproblem}
% W(\mu,\nu)=\int c(\eta,\xi) d \gamma(\eta,\xi) = \inf_{\beta \in\Sigma(\mu,\nu)} \int c(\eta,\xi) d \beta(\eta,\xi)
% \end{equation}
% where $\Sigma(\mu,\nu)$ is the set of probability measures whose first marginal is $\mu$, and second one is $\nu$. $W$ is usually referred as L2-Wasserstein distance when $c$ is the square of Euclidean distance.

When $\mu$ and $\nu$ both have finite supports, we can represent the distributions as vectors $\bm{\mu}\in\mathbb{R}_{+}^{m},\bm{\nu}\in\mathbb{R}_{+}^{n}$, where $\|\bm{\mu}\|_1=\|\bm{\nu}\|_1=1$. Then the Wasserstein distance between $\bm{\mu}$ and $\bm{\nu}$ is computed by (\ref{eq:target1}).
%Furthermore, if only data rather than distribution is available, we can use empirical distribution to give a valid approximation of Wasserstein space
%~\cite{bernton2017inference,genevay2017sinkhorn}. 
In other cases, given realizations $\{x_i\}_{i=1}^{m}$ and $\{y_j\}_{j=1}^{n}$ of $\mu$ and $\nu$, respectively, we can approximate them by empirical distributions $\hat{\mu}=\frac{1}{m}\sum_{x_i}\delta_{x_i}$ and $\hat{\nu}=\frac{1}{n}\sum_{y_j}\delta_{y_j}$. 
The supports of $\hat{\mu}$ and $\hat{\nu}$ are finite, so similarly we have $\bm{\mu}=\frac{1}{m}\bm{1}_{\{x_i\}}$, $\bm{\nu}=\frac{1}{n}\bm{1}_{\{y_j\}}$, and $\bm{C}=[c(x_i, y_j)]\in \mathbb{R}_{+}^{m\times n}$. 

The optimization problem (\ref{eq:target1}) can be solved by linear programming (LP) methods. 
LP tends to provide a sparse solution, which is preferable in applications like histogram calibration or color transferring~\citep{rabin2014adaptive}. However, the cost of LP scales at least $O(n^3 \log n)$ for general metric, where $n$ is the number of data points~\citep{pele2009fast}. 
%Also, several proximal splitting method for optimal transport is developed~\cite{papadakis2014optimal}, but they either are tediously formulated or have a slow convergence, and hence not very efficient in terms of implementation and computation. 
As aforementioned, an alternative optimization method is the Sinkhorn algorithm in~\cite{cuturi2013sinkhorn}.
%This algorithm is further extended to estimate barycenters in the Wasserstein space~\cite{amari2017information,schmitz2017wasserstein,cuturi2014fast}.
Following the same strategy, many variants of the Sinkhorn algorithm have been proposed~\citep{altschuler2017near, dvurechensky2017adaptive,thibault2017overrelaxed}. 
%Following the same strategy, many variants of the Sinkhorn algorithm have been proposed, e.g., the similar triangular method~\cite{dvurechensky2017adaptive}, the over-relaxed Sinkhorn-Knopp Algorithm~\cite{thibault2017overrelaxed}, and the Greenkhorn algorithm~\cite{altschuler2017near}. 
Unfortunately, all these methods only approximate original optimal transport by its regularized version and their performance both in terms of numerical stability and computational complexity is sensitive to the choice of $\epsilon$. 
%More importantly, for certain machine learning problems employing Wasserstein distance, the use of the Sinkhorn distance tends to degrade the learning performance.

\begin{figure*}[!t]
\centering
\includegraphics[width=0.97\textwidth]{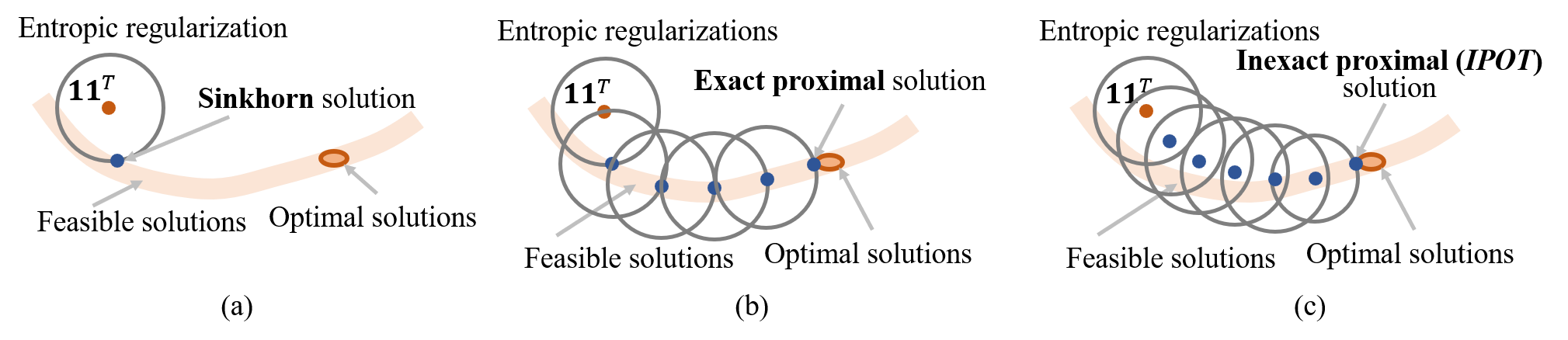}
\vspace{-0.15in}
\caption{\label{fig:sketch} Schematic of the convergence path of (a) Sinkhorn algorithm, (b) exact proximal point algorithm and (c) inexact proximal point algorithm (IPOT). The distance shown is in Bregman sense. Sinkhorn solution is feasible and the closest to optimal solution set within the $D_h$ constraints, but is not in the optimal solution set. However, proximal point algorithm, no matter exact or inexact, solves optimization with $D_h$ constraints iteratively, until an optimal solution is reached.}
\end{figure*}

\vspace{-5pt}

\subsection{Generalized Proximal Point Method}
\vspace{-5pt}

Proximal point methods are widely used in optimization~\citep{afriat1971theory,parikh2014proximal,rockafellar1976augmented,rockafellar1976monotone}. Here, we introduce its generalized form.
%Since Problem (\ref{eq:target1}) has nice linearity, we can utilize the convex programming formulation of proximal point method to solve it.
Given a convex objective function $f$ defined on $\mathcal{X}$ with optimal solution set $\mathcal{X}^*\subset \mathcal{X}$, generalized proximal point algorithm aims to solve
\begin{eqnarray}\label{eq:minimum}
\begin{aligned}
\sideset{}{_{x\in \mathcal{X}}}\argmin f(x).
\end{aligned}
\end{eqnarray} 

In order to solve Problem (\ref{eq:minimum}), the algorithm generates a sequence $\{x^{(t)}\}_{t=1,2,...}$ by the following generalized proximal point iterations: 
\begin{eqnarray}\label{eq:proximal}
\begin{aligned}
x^{(t+1)} = \sideset{}{_{x\in \mathcal{X}}}\argmin f(x) + \beta^{(t)} d(x, x^{(t)}),
\end{aligned}
\end{eqnarray} 
where $d$ is a regularization term used to define the proximal operator, usually defined to be a closed proper convex function. For classical proximal point method, $d$ adopts the square of Euclidean distance, i.e., $d(x,y) =\|x-y\|_2^2$, in which case the sequence $\{x^{(t)}\}$ converges to an element in $\mathcal{X}^*$ almost surely.

The generalized proximal point method has many advantages, e.g, it has a robust convergence behavior 
-- a fairly mild condition on $\beta^{(t)}$ guarantee its convergence for some given $d$, and the 
specific choice of $\beta^{(t)}$ generally just affects its convergence rate. 
Moreover, even if the proximal operator defined in (\ref{eq:proximal}) is not exactly evaluated in each iteration, giving rise to inexact proximal point methods, the global convergence of which with local linear rate is still guaranteed under certain conditions~\citep{solodov2001unified,schmidt2011convergence}. 
% However, existing convergence analysis of proximal method mainly focuses on the case using squared Euclidean distance as $d$, few of them consider the  convergence rate when using other distances~\cite{eckstein1993nonlinear,eckstein1998approximate}.

%%%%%%%%%%%%%%%%%%%%%%%%%%%%%%%%%%%%%%%%%%%%%%%%%%%%%%%%%%%%%%%%%%%%%%%%%%

\vspace{-5pt}

\section{BREGMAN DIVERGENCE BASED  PROXIMAL POINT METHOD}
\vspace{-5pt}

% \subsection{Proximal method using }
\label{section3}
In this section we will develop the main algorithm IPOT.
Specifically, we will use generalized proximal point method to solve the optimal transport problem \eqref{eq:target1}.  Recall the proximal point iteration \eqref{eq:proximal}, we take $f(\bm{\Gamma})=\langle \bm{C}, \bm{\Gamma} \rangle$, $\mathcal{X}=\Sigma(\bm{\mu},\bm{\nu})$, and $d(\cdot,\cdot)$ 
to be Bregman divergence $D_h$ based on entropy function $h(\bm{x})=\sum_i x_i \ln x_i$, i.e., 
\begin{eqnarray}\label{bregman_divergence}
\begin{aligned}
D_h (\bm{x},\bm{y}) = \sum_{i=1}^n x_i \log \frac{x_i}{y_i} - \sum_{i=1}^n x_i + \sum_{i=1}^n y_i.
\end{aligned}
\end{eqnarray}
As a result, the proximal point iteration for problem \eqref{eq:target1} can be written as
\begin{align}\label{proximal2}
\bm{\Gamma}^{(t+1)} = \sideset{}{_{\bm{\Gamma}\in\Sigma(\bm{\mu},\bm{\nu})}}\argmin\langle \bm{C}, \bm{\Gamma} \rangle + \beta^{(t)} D_h(\bm{\Gamma},\bm{\Gamma}^{(t)}).
\end{align}
However, it is still not trivial to solve the above optimization problem in each iteration, since optimization problems with such complicated constraints generally does not have a closed-form solution. Fortunately, with some reorganization, we can solve it with Sinkhorn algorithm. 

Substituting Bregman divergence \eqref{bregman_divergence} into proximal point iteration (\ref{proximal2}), with simplex constraints, we obtain
\begin{eqnarray}\label{proximal3}
\begin{aligned}
\bm{\Gamma}^{(t+1)}  
=  \sideset{}{_{\bm{\Gamma}\in\Sigma(\bm{\mu},\bm{\nu})}}\argmin  \langle \bm{C}-\beta^{(t)} \log \bm{\Gamma}^{(t)}, \bm{\Gamma} \rangle + \beta^{(t)}  h(\bm{\Gamma}).  
\end{aligned}
\end{eqnarray}
Denote $\bm{C}'=\bm{C}-\beta^{(t)} \ln\bm{\Gamma}^{(t)}$. Note that for optimization problem \eqref{proximal3}, $\bm{\Gamma}^{(t)}$ is a fixed value that is not relevant to optimization variable $\bm{\Gamma}$. Therefore, $\bm{C}'$ can be viewed as a new cost matrix that is known, and problem \eqref{proximal3} is an entropy regularized optimal transport problem. 
Comparing to \eqref{eq:sinkhorn}, problem \eqref{proximal3} can be solved by Sinkhorn iteration by replacing $G_{ij}$ by $G'_{ij} = e^{-C'_{ij}/\beta^{(t)}} = \Gamma_{ij}^{(t)} e^{-C_{ij}/\beta^{(t)}}$. As we will later shown in Section \ref{section32}, as $t\to \infty$, $\bm{\Gamma}^{(t)}$ will converge to an optimal transport plan.

     \begin{algorithm}[t]
       \caption{IPOT($\pmb{\mu},\pmb{\nu},\bm{C}$)}
	\label{algo1}       
	 \begin{algorithmic}
          \STATE \textbf{Input:} Probabilities $\{\pmb{\mu},\pmb{\nu}\}$ on support points $\{x_i\}_{i=1}^m$, $\{y_j\}_{j=1}^n$, cost matrix $\bm{C}=[\|x_i-y_j\|]$
	\STATE $\bm{b}\leftarrow \frac{1}{m}\pmb{1}_m$
	\STATE $G_{ij}\leftarrow e^{-\frac{C_{ij}}{\beta}}$
	\STATE $\bm{\Gamma}^{(1)} \leftarrow  \pmb{11}^T$
	\FOR {$t=1,2,3,...$}
	\STATE $\bm{Q} \leftarrow  \bm{G}\odot \bm{\Gamma}^{(t)}$
	\STATE \textbf{for} {$l=1,2,3,...,L$} \textbf{do}$\quad$ // Usually set $L=1$
	\STATE \quad $\bm{a}\leftarrow \frac{\pmb{\mu}}{\bm{Qb}}$, $\bm{b}\leftarrow \frac{\pmb{\nu}}{\bm{Q}^T \bm{a}}$
	\STATE \textbf{end for}
	\STATE $\bm{\Gamma}^{(t+1)} \leftarrow  \text{diag}(\bm{a}) \bm{Q} \text{diag}(\bm{b})$
	\ENDFOR
        \end{algorithmic}
      \end{algorithm}

Figure \ref{fig:sketch} illustrates how Sinkhorn and IPOT solutions approach optimal solution with respect to number of iterations in sense of Bregman divergence. First, let's consider Sinkhorn algorithm. The objective function of Sinkhorn \eqref{eq:sinkhorn} has regularization term $\epsilon h(\bm{\Gamma})$, which can be equivalently rewritten as constraint $D_h(\bm{\Gamma},\bm{11}^T)\leq \eta$ for some $\eta>0$. Therefore, Sinkhorn solution is feasible within the $D_h$ constraints and the closest to optimal solution set, as shown in Figure \ref{fig:sketch} (a). 

Proximal point algorithms, on the other hand, solves optimization with $D_h$ constraints iteratively as shown in Figure \ref{fig:sketch} (b)(c). Different from Sinkhorn algorithm, proximal point algorithms converge to the optimal solution with nested iterative loops.
Exact proximal point method, i.e., solving (\ref{proximal3}) exactly as shown in Figure \ref{fig:sketch} (b),  provides a feasible solution that is closest to the optimal solution set in each proximal iteration until the optimal solution reached. However, the disadvantage for exact proximal point method is that it's not efficient. 

The proposed inexact proximal point method (IPOT) does not solve (\ref{proximal3}) exactly. Instead, a very small amount of Sinkhorn iteration, e.g., only one iteration, is suggested. The reason for this is three-fold.  First, the convergence of Sinkhorn algorithm in each proximal iteration is not required, since it is just intermediate step. Second, usually in numerical optimization, the first a few iterations achieve the most decreasing in the objective function. Performing only the first a few iterations has high cost performance. Last and perhaps the most important, it is observed that IPOT can still converge to an exact solution with small amount of inner iterations\footnote{Similar ideas are also used in accelerating the expectation-maximization algorithm with only one iteration used in the maximization step  \citep{lange1995gradient}.}.

The algorithm is shown in Algorithm \ref{algo1}. For simplicity we use $\beta = \beta^{(t)}$. Denote $\text{diag}(\bm{a})$ the diagonal matrix with $a_i$ as its $i$-th diagonal elements. Denote $\odot$ as element-wise matrix multiplication and $\frac{(\cdot)}{(\cdot)}$ as element-wise division. We use warm start to improve the efficiency, i.e. in each proximal point iteration, we use the final value of $\bm{a}$ and $\bm{b}$ from last proximal point iteration as initialization instead of $\bm{b}^{(0)} = \pmb{1}_m$.
Later we will show empirically IPOT will converge under a large range of $\beta$ with $L=1$,
a single inner iteration will suffice.

\vspace{-5pt}
\section{WASSERSTEIN BARYCENTER BY IPOT}
\label{section5}
\vspace{-5pt}

We now extend IPOT method to a related problem -- computing the Wasserstein barycenter. Wasserstein barycenter is widely used in machine learning and computer vision \citep{benamou2015iterative, rabin2011wasserstein}.
Given a set of distributions $\mathcal{P} = \{\bm{p}_1,\bm{p}_2,...,\bm{p}_K\}$, their Wasserstein barycenter is defined as
\begin{eqnarray}\label{big_target}
\begin{aligned}
\bm{q}^*(\mathcal{P},\bm{\lambda}) = \sideset{}{_{\bm{q}\in\mathcal{Q}}}\argmin \sideset{}{_{k=1}^K}\sum \lambda_k W(\bm{q},\bm{p}_k)
\end{aligned}
\end{eqnarray}
where $\mathcal{Q}$ is in the space of probability distributions, $\sum_{k=1}^K \lambda_k = 1$, and $W(\bm{q},\bm{p}_k)$ is the Wasserstein distance between the barycenter $\bm{q}$ and distribution $\bm{p}_k$, which takes the form
\begin{equation}\label{eq:sampleW}
W(\bm{q},\bm{p}_k) = \min_{\bm{\Gamma}} {\langle \bm{C}, \bm{\Gamma} \rangle },\quad \text{s.t.}\quad \bm{\Gamma}\textbf{1}=\bm{p}_k, \bm{\Gamma}^T\textbf{1}=\bm{q}.
\end{equation}

The idea of IPOT method can also be used to compute Wasserstein barycenter. 
Substitute (\ref{eq:sampleW}) into \eqref{big_target} and reorganize, we have
\begin{align*}
\bm{q}^*(\mathcal{P},\bm{\lambda}) = \sideset{}{_{\bm{q}\in\mathcal{Q}}}\argmin  \sum_{k=1}^K \lambda_k {\langle \bm{C},\bm{\Gamma}_k \rangle }, \quad \text{s.t.} \quad \bm{\Gamma}_k\textbf{1}=\bm{p}_k,\text{ and } \exists \bm{q},\bm{\Gamma}_k^T\textbf{1}=\bm{q}.
\end{align*}
Analogous to IPOT, we take 
\begin{align*}
    f(\{\bm{\Gamma}_k\})=\sum_{k=1}^K \lambda_k {\langle \bm{C}, \bm{\Gamma}_k \rangle },
\end{align*}
take $\mathcal{X}$ to be the corresponding constraints, and take $d(\{\bm{\Gamma}_k\},\{\bm{\Gamma}_k^{(t)}\})$ to be $\sum_{k=1}^K \lambda_k D_h(\bm{\Gamma}_k,\bm{\Gamma}_k^{(t)})$. The proximal point iteration for barycenter is
\begin{align}
\bm{\Gamma}_k^{(t+1)} = \argmin_{\bm{\Gamma}_k} \sum_{k=1}^K \lambda_k \left( \langle \bm{C},\bm{\Gamma}_k \rangle +\beta^{(t)} D_h(\bm{\Gamma}_k,\bm{\Gamma}_k^{(t)})\right) \quad
\text{s.t.}\quad \bm{\Gamma}_k\textbf{1}=\bm{p}_k, \text{ and } \exists \bm{q},\bm{\Gamma}_k^T\textbf{1}=\bm{q}. \label{eq:bw_iter1}
\end{align}
With further organization, we have
\begin{align}
\bm{\Gamma}_k^{(t+1)} = \argmin_{\bm{\Gamma}_k}  \sum_{k=1}^K \lambda_k \left( \langle  \bm{C}-\beta^{(t)} \log \bm{\Gamma}_k^{(t)},\bm{\Gamma}_k \rangle  +\beta^{(t)} h(\bm{\Gamma}_k)\right)
\quad \text{s.t.}\quad \bm{\Gamma}_k\textbf{1}=\bm{p}_k, \text{ and } \exists \bm{q},\bm{\Gamma}_k^T\textbf{1}=\bm{q}. \label{eq:bw_iter}
\end{align}
On the other hand, analogous to Sinkhorn algorithm, \citet{benamou2015iterative} propose \textit{Bregman iterative projection} that seeks to solve an entropy regularized barycenter,
\begin{eqnarray}\label{eq:sink-wb}
\begin{aligned}
\bm{q}_{\epsilon}^*(\mathcal{P},\bm{\lambda}) = \sideset{}{_{\bm{q}\in\mathcal{Q}}}\argmin \sideset{}{_{k=1}^K}\sum \lambda_k W_{\epsilon}(\bm{q},\bm{p}_k).
\end{aligned}
\end{eqnarray}
Comparing \eqref{eq:bw_iter} and \eqref{eq:sink-wb}, the minimization in each proximal point iteration in \eqref{eq:bw_iter} can be solved by Bregman iterative projection~\citep{benamou2015iterative} using the same change-of-variable technique in Section \ref{section3}. 

\begin{algorithm}[!t]
\label{alg:barycenter}
\caption{IPOT-WB($\{\bm{p}_k\}$)}
\begin{algorithmic}[1]\label{algo_proximal}
\STATE \textbf{Input:} The probability vector set $\{\bm{p}_k\}$ on grid $\{y_i\}_{i=1}^n$
\STATE $\bm{b}_k\leftarrow \frac{1}{n}\textbf{1}_n,\forall k=1,2,...,K$
\STATE $C_{ij}\leftarrow c(y_i,y_j):=||y_i-y_j||^2_2$
\STATE $G_{ij}\leftarrow e^{-\frac{C_{ij}}{\beta}}$
\STATE $\bm{\Gamma}_k \leftarrow  \mathbf{11}^T$
\FOR {$t=1,2,3,...$}
\STATE $\bm{H}_k \leftarrow  \bm{G} \odot \bm{\Gamma}_k,\forall k=1,2,...,K$
\FOR {$l=1,2,3,...,L$}
\STATE $\bm{a}_k\leftarrow \frac{\bm{q}}{\bm{H}_k \bm{b}_k},\forall k=1,2,...,K$, 
\STATE $\bm{b}_k\leftarrow \frac{\bm{p}_k}{\bm{H}_k^T \bm{a}_k},\forall k=1,2,...,K$
\STATE $\bm{q} \leftarrow  \prod_{k=1}^K (\bm{a}_k \odot (\bm{H}_k \bm{b}_k))^{\lambda_k}$
\ENDFOR
\STATE $\bm{\Gamma}_k \leftarrow  \text{diag}(\bm{a}_k) \bm{H}_k \text{diag}(\bm{b}_k),\forall k=1,2,...,K$
\ENDFOR
\STATE  \textbf{Return} $\bm{q}$
\end{algorithmic}
\end{algorithm}

We provide the detailed algorithm in Algorithm \ref{algo_proximal}, and name this algorithm \textit{IPOT-WB}. Same as Algorithms~\ref{algo1}, IPOT-WB algorithm can converge with $L=1$ and a large range of $\beta$. 

Since the sketch in Figure \ref{fig:sketch} does not have restrictions on $f$ and $\mathcal{X}$, the sketch and the corresponding analysis for IPOT also applies to IPOT-WB, except the distance is in sense of convex combination of Bregman divergences instead of a single Bregman divergence.  

% In particular, plugging the definition of Wasserstein distance in (\ref{eq:target1}) into (\ref{big_target}) with some derivation (see supplementary material for full derivation), we get the proximal point iteration for barycenter analogous to (\ref{proximal2}) as
% \begin{eqnarray*}
% \begin{aligned}
% \{\bm{\Gamma}_k^{(t+1)}\} &= \argmin_{\{\bm{\Gamma}_k\}} \sum_{k=1}^K \lambda_k \{\langle \bm{\Gamma}_k, \bm{C} \rangle +\beta^{(t)} D_h(\bm{\Gamma}_k,\bm{\Gamma}_k^{(t)})\}\\
% &\text{s.t.}\quad \bm{\Gamma}_k\textbf{1}=\bm{p}_k,\forall k, \quad \exists \bm{q},\bm{\Gamma}_k^T\textbf{1}=\bm{q}.
% \end{aligned}
% \end{eqnarray*}

\vspace{-5pt}
\section{THEORETICAL ANALYSIS}
\label{section32}

\vspace{-5pt}

Classical proximal point algorithm has sublinear convergence rate. However, after we replace the square of Euclidean distance in classical proximal point algorithm by Bregman distance, we can prove stronger convergence rate -- a linear rate for both IPOT and IPOT-WB. First, we consider when the optimization problem (\ref{proximal3}) is solved exactly, we have a linear convergence rate guaranteed by the following theorem.
\begin{theorem}
Let $\{x^{(t)}\}$ be a sequence generated by the proximal point algorithm 
\begin{eqnarray*}
\begin{aligned}
x^{(t+1)} = &\sideset{}{_{x\in \mathcal{X}}}\argmin f(x) + \beta^{(t)} D_h(x,x^{(t)}),
\end{aligned}
\end{eqnarray*}
where $f$ is continuous and convex. Assume $f^* = \min f(x) > -\infty$. Then, with $\sum_{t=0}^{\infty} \beta^{(t)}=\infty$, we have
\begin{equation*}
\label{mono-converge}
f(x^{(t)})\downarrow f^*.
\end{equation*}
If we further assume $f$ is linear and $\mathcal{X}$ is bounded, the algorithm has linear convergence rate.
\end{theorem}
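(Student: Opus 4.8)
The plan is to combine two ingredients: a per-iteration three-point inequality obtained from the optimality condition of the proximal subproblem, and, for the linear rate, a polyhedral error bound for the linear objective over the bounded feasible set. Two elementary facts will be used repeatedly. Since $D_h(u,w)=h(u)-h(w)-\langle\nabla h(w),u-w\rangle$, expanding directly gives the Bregman three-point identity
\[
D_h(u,w)=D_h(u,v)+D_h(v,w)+\langle\nabla h(v)-\nabla h(w),\,u-v\rangle .
\]
Moreover, since $x^{(t+1)}$ minimizes $f(x)+\beta^{(t)}D_h(x,x^{(t)})$ over $\mathcal{X}$ and the gradient of $x\mapsto D_h(x,x^{(t)})$ equals $\nabla h(x)-\nabla h(x^{(t)})$, the first-order optimality condition produces a subgradient $g^{(t+1)}\in\partial f(x^{(t+1)})$ with $\langle g^{(t+1)}+\beta^{(t)}(\nabla h(x^{(t+1)})-\nabla h(x^{(t)})),\,x-x^{(t+1)}\rangle\ge 0$ for every $x\in\mathcal{X}$.

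First I would establish monotonicity and the limit $f(x^{(t)})\downarrow f^*$. Comparing the proximal objective at $x^{(t+1)}$ with its value at the feasible point $x^{(t)}$, minimality gives $f(x^{(t+1)})\le f(x^{(t)})-\beta^{(t)}D_h(x^{(t+1)},x^{(t)})\le f(x^{(t)})$, so $\{f(x^{(t)})\}$ is nonincreasing and converges to some $\bar f\ge f^*$. To identify $\bar f$, I would fix $x^*\in\mathcal{X}^*$, put $x=x^*$ in the optimality inequality, invoke convexity of $f$, and substitute the three-point identity to reach the key estimate
\[
\frac{f(x^{(t+1)})-f^*}{\beta^{(t)}}\le D_h(x^*,x^{(t)})-D_h(x^*,x^{(t+1)})-D_h(x^{(t+1)},x^{(t)}).
\]
This already shows $D_h(x^*,x^{(t)})$ is nonincreasing; telescoping and discarding the final nonnegative term bounds the series $\sum_t(f(x^{(t+1)})-f^*)/\beta^{(t)}$ by $D_h(x^*,x^{(0)})<\infty$. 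Since $f(x^{(t+1)})-f^*\ge\bar f-f^*$ for all $t$, the step-size condition then forces $\bar f=f^*$, the relevant requirement being $\sum_t 1/\beta^{(t)}=\infty$, which is automatic for the bounded (in particular constant) $\beta$ used in the algorithm.

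For the linear rate I would use that $f(x)=\langle c,x\rangle$ is linear and $\mathcal{X}$ is a bounded polytope, so Hoffman's lemma supplies a global linear error bound $\mathrm{dist}(x,\mathcal{X}^*)\le\theta\,(f(x)-f^*)$ on $\mathcal{X}$. The aim is to turn the telescoping estimate into a geometric contraction: if I can show $f(x^{(t+1)})-f^*\ge c_0\,D_h(x^*,x^{(t+1)})$ for a fixed $c_0>0$ and a suitably chosen $x^*\in\mathcal{X}^*$, then the key estimate gives $D_h(x^*,x^{(t+1)})\le\frac{\beta^{(t)}}{\beta^{(t)}+c_0}\,D_h(x^*,x^{(t)})$, so that $D_h(x^*,x^{(t)})$, and hence $f(x^{(t)})-f^*$, decay geometrically.

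The hard part is exactly this comparison between the objective gap and the Bregman divergence to the optimal set. The error bound controls a Euclidean distance, whereas $D_h$ weights coordinates by $1/x_i$ and diverges as coordinates approach $0$; since optimal transport plans are typically sparse, the chosen $x^*$ sits on the boundary of the simplex. I would split $D_h(x^*,x^{(t)})$ into the support of $x^*$ and its complement: on the complement the divergence collapses to the linear term $\sum_{i:x^*_i=0}x^{(t)}_i$, itself controlled by the gap, while on the support I must show the relevant coordinates of the iterates stay uniformly bounded away from $0$ (using the strictly positive entropic form of the update and boundedness of $\mathcal{X}$) so that $D_h$ is comparable there to the squared Euclidean distance. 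Securing this uniform lower bound, and thereby closing the chain $D_h(x^*,x^{(t)})\le C\,(f(x^{(t)})-f^*)$, is the main obstacle; the rest is routine bookkeeping.
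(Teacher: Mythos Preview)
Your treatment of monotonicity and $f(x^{(t)})\downarrow f^*$ is essentially the paper's: the same three-point identity combined with first-order optimality gives the same descent inequality, and telescoping finishes. (Your observation that the telescoped sum actually needs $\sum_t 1/\beta^{(t)}=\infty$ is correct; the paper's proof in fact only treats constant $\beta$.)

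For the linear rate your route diverges from the paper's, and the ``main obstacle'' you identify is real and, as stated, unresolved: controlling $D_h(x^*,x^{(t)})$ by the objective gap via Hoffman's bound requires a uniform positive lower bound on the iterates' coordinates on the support of $x^*$, which you do not establish. The paper sidesteps this difficulty completely by proving the inequality $f(x)-f^*\ge \eta\,D_h(x^*,x)$ directly from the polytope structure, without ever passing through a Euclidean error bound. The argument is short: write $x\in\mathcal{X}$ as a convex combination $\sum_i\lambda_i e_i$ of the vertices with $e_0=x^*$; since $d(x):=D_h(x^*,x)$ is convex in $x$ (true for the entropy kernel) and vanishes at $x^*$, Jensen gives $d(x)\le(1-\lambda_0)\,d_{\max}$ with $d_{\max}=\max_{i\ge 1}d(e_i)$, while linearity of $f$ gives $f(x)-f^*=\sum_{i\ge 1}\lambda_i\bigl(f(e_i)-f^*\bigr)\ge(1-\lambda_0)\,\delta$ with $\delta=\min_{i\ge 1}\bigl(f(e_i)-f^*\bigr)>0$. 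Taking $\eta=\delta/d_{\max}$ yields $f(x)-f^*\ge\eta\,d(x)$ for all $x\in\mathcal{X}$, and plugging this into your own key estimate gives the contraction
\[
D_h(x^*,x^{(t+1)})\;\le\;\frac{1}{1+\eta/\beta}\,D_h(x^*,x^{(t)}).
\]
So the missing idea is simply to compare the Bregman distance to the gap \emph{globally via the vertices}, using convexity of $D_h(x^*,\cdot)$, rather than locally via coordinate bounds; once you have that, the rest is exactly the bookkeeping you already wrote down.
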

%The proof is included in the supplement material. 
%In practice, machine error cannot be avoided, and $L$ is always finite. 
More importantly, the following theorem gives us a guarantee of convergence when (\ref{proximal3}) is solved inexactly.
 
\begin{theorem} \label{thom2}
Let $\{ x^{(t)} \}$ be the sequence generated by the Bregman distance based proximal point algorithm with inexact scheme (i.e., finite number of inner iterations are employed).
Define an error sequence $\{ e^{(t)} \}$ where
\begin{eqnarray*}
\begin{aligned}
e^{(t+1)} \in  \beta^{(t)}  \left[ \nabla  f(x^{(t+1)} ) +  \partial \iota_{\mathcal{X}} (x^{(t+1)} ) \right] 
 +  \left[ \nabla h(x^{(t+1)} ) - \nabla h(x^{(t)} ) \right],
\end{aligned}
\end{eqnarray*}
where $\iota_{\mathcal{X}}$ is the indicator function of set $\mathcal{X}$, and $\partial_{l_X} (\cdot)$ is the subdifferential of the indicator function $l_{X}$. If the sequence $\{ e^{k} \}$ satisfies $\sum_{k=1}^{\infty} \| e^{k} \| < \infty$ and $\sum_{k=1}^{\infty} \langle e^{k}, x^{(t)} \rangle$ exists and is finite, then $\{ x^{(t)} \}$ converges to $x^{\infty}$ with $f(x^{\infty}) = f^*$.
If the sequence $\{ e^{(t)} \}$ satisfies that exist $\rho\in (0,1)$ such that$ \| e^{(t)} \| \le \rho^t$, $ \langle e^{(t)}, x^{(t)} \rangle \le \rho^t$ and with assumptions that $f$ is linear and $\mathcal{X}$ is bounded, then $\{ x^{(t)} \}$ converges linearly.
\end{theorem}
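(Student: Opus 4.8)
The plan is to reduce both statements to a single master inequality controlling the Bregman divergence from the iterates to the optimal set, and then to run a quasi-Fej\'er argument for the summable-error case and a contraction argument for the geometrically-decaying-error case. First I would extract from the definition of $e^{(t+1)}$ subgradients $g^{(t+1)}\in\partial f(x^{(t+1)})$ and $s^{(t+1)}\in\partial\iota_{\mathcal{X}}(x^{(t+1)})$ such that
\[
\nabla h(x^{(t)}) - \nabla h(x^{(t+1)}) = \beta^{(t)}\big(g^{(t+1)} + s^{(t+1)}\big) - e^{(t+1)}.
\]
Combining this with the three-point identity for Bregman divergences,
\[
D_h(x^{*},x^{(t)}) - D_h(x^{*},x^{(t+1)}) - D_h(x^{(t+1)},x^{(t)}) = \langle \nabla h(x^{(t+1)}) - \nabla h(x^{(t)}),\, x^{*} - x^{(t+1)} \rangle,
\]
the convexity inequality $\langle g^{(t+1)}, x^{*}-x^{(t+1)}\rangle \le f^{*}-f(x^{(t+1)})$, and the normal-cone property $\langle s^{(t+1)}, x^{*}-x^{(t+1)}\rangle \le 0$ valid for every $x^{*}\in\mathcal{X}^{*}\subset\mathcal{X}$, I obtain the master inequality
\[
D_h(x^{*},x^{(t+1)}) + D_h(x^{(t+1)},x^{(t)}) + \beta^{(t)}\big(f(x^{(t+1)}) - f^{*}\big) \le D_h(x^{*},x^{(t)}) + \langle e^{(t+1)},\, x^{(t+1)} - x^{*}\rangle.
\]
Everything else is driven by this one relation.

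For the first claim I would sum the master inequality over $t$. The hypotheses $\sum_{k}\|e^{k}\|<\infty$ and the finiteness of $\sum_{k}\langle e^{k},x^{(k)}\rangle$ make the accumulated error $\sum_t\langle e^{(t+1)},x^{(t+1)}-x^{*}\rangle$ convergent, the first controlling $\langle e^{(t+1)},x^{*}\rangle$ through $\|x^{*}\|\,\|e^{(t+1)}\|$ and the second controlling $\langle e^{(t+1)},x^{(t+1)}\rangle$. Hence $\{D_h(x^{*},x^{(t)})\}$ is bounded and, by nonnegativity of the two dropped terms, $\sum_t\beta^{(t)}(f(x^{(t+1)})-f^{*})<\infty$; together with $\sum_t\beta^{(t)}=\infty$ this forces $\liminf_t f(x^{(t)})=f^{*}$. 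Boundedness of the level sets of $D_h(x^{*},\cdot)$ yields a subsequence $x^{(t_j)}\to x^{\infty}$, and continuity of $f$ gives $f(x^{\infty})=f^{*}$, i.e. $x^{\infty}\in\mathcal{X}^{*}$. Setting $x^{*}=x^{\infty}$ in the master inequality shows $\{D_h(x^{\infty},x^{(t)})\}$ converges; since it admits a subsequence tending to $0$, the whole sequence $D_h(x^{\infty},x^{(t)})\to 0$, and therefore $x^{(t)}\to x^{\infty}$.

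For the linear rate I would invoke the weak sharp minimum property of a linear program over a bounded polyhedron: there is $\gamma>0$ with $f(x)-f^{*}\ge\gamma\,\mathrm{dist}(x,\mathcal{X}^{*})$ for all $x\in\mathcal{X}$. Writing $V^{(t)}=\min_{x^{*}\in\mathcal{X}^{*}}D_h(x^{*},x^{(t)})$ and taking $x^{*}$ in the master inequality to be the Bregman projection of $x^{(t)}$ onto $\mathcal{X}^{*}$, the telescoping part gives $V^{(t+1)}\le V^{(t)}-\beta^{(t)}(f(x^{(t+1)})-f^{*})+\langle e^{(t+1)},x^{(t+1)}-x^{*}\rangle$. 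On the bounded domain one shows $V^{(t+1)}\le\kappa\,(f(x^{(t+1)})-f^{*})$ for a constant $\kappa$, combining the sharp bound $f-f^{*}\ge\gamma\,\mathrm{dist}$ with a local quadratic upper bound $D_h(x^{*},x)\le L\,\mathrm{dist}(x,\mathcal{X}^{*})^{2}$ and $\mathrm{dist}\le\mathrm{diam}(\mathcal{X})$, so that $f(x^{(t+1)})-f^{*}\ge V^{(t+1)}/\kappa$. The error term is bounded by $(1+R)\rho^{t+1}$ using $\|x^{*}\|\le R$ and the hypotheses $\|e^{(t+1)}\|\le\rho^{t+1}$, $\langle e^{(t+1)},x^{(t+1)}\rangle\le\rho^{t+1}$. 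Substituting yields the contraction $V^{(t+1)}\le\frac{\kappa}{\kappa+\beta^{(t)}}V^{(t)}+C\rho^{t}$, and a standard recursion lemma (with $\beta^{(t)}$ bounded below by a positive constant, as in Algorithm \ref{algo1}) gives $V^{(t)}=O(\theta^{t})$ for some $\theta<1$, hence linear convergence of $V^{(t)}$ and of $f(x^{(t)})-f^{*}$.

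The step I expect to be the main obstacle is the boundary behaviour of the entropy in the quadratic comparison $D_h(x^{*},x)\le L\,\mathrm{dist}(x,\mathcal{X}^{*})^{2}$, since it rests on a bound for $\nabla^{2}h(x)=\mathrm{diag}(1/x_i)$, which blows up as the iterates approach the boundary of the simplex — and the exact transport plan is sparse, so the iterates genuinely do approach it. Making the constant $\kappa$ uniform therefore requires either confining the iterates to a compact subset of the relative interior, or a sharper estimate of $D_h(x^{*},x)$ that exploits the convention $0\log 0 = 0$ on the coordinates where $x^{*}$ vanishes. Establishing this uniform comparability, rather than the Fej\'er bookkeeping, is where the real work lies.
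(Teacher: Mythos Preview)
Your derivation of the master inequality via the three-point identity, and the quasi-Fej\'er argument for the summable-error convergence claim, match the paper's Steps~1--3 essentially line for line; nothing to add there.

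The difference is in how you reach the linear rate. Both you and the paper need the same comparison
\[
D_h(x^{*},x^{(t+1)}) \;\le\; \kappa\big(f(x^{(t+1)})-f^{*}\big),
\]
after which the contraction recursion $V^{(t+1)}\le \mu V^{(t)} + C\rho^{t}$ and a standard perturbed-linear-recurrence lemma finish the job (the paper cites \citet{So2017non} for that last step). You get this comparison by composing the LP sharp-minimum bound $f-f^{*}\ge\gamma\,\mathrm{dist}(\cdot,\mathcal{X}^{*})$ with a quadratic upper bound $D_h(x^{*},x)\le L\,\mathrm{dist}(x,\mathcal{X}^{*})^{2}$, and you correctly flag that the latter fails uniformly because $\nabla^{2}h$ blows up at the boundary. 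The paper sidesteps this detour entirely: it observes that $x\mapsto D_h(x^{*},x)$ is \emph{convex} (for the entropy kernel this is a one-line Hessian check), and since $\mathcal{X}$ is a bounded polytope every $x\in\mathcal{X}$ is a convex combination $\sum_i\lambda_i e_i$ of its vertices. Jensen's inequality then gives $D_h(x^{*},x)\le\sum_i\lambda_i D_h(x^{*},e_i)$, while linearity of $f$ gives $f(x)-f^{*}=\sum_{i}\lambda_i(f(e_i)-f^{*})$. Taking $\eta=\min_{e_i\neq x^{*}}(f(e_i)-f^{*})\big/\max_i D_h(x^{*},e_i)$ yields $f(x)-f^{*}\ge\eta\,D_h(x^{*},x)$ directly, i.e.\ $\kappa=1/\eta$, with no Euclidean intermediary and no Hessian bound on $h$. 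So the ``main obstacle'' you identify is an artifact of the route you chose; the paper's polytope/Jensen argument gives the needed inequality in one stroke, and the remainder of your contraction bookkeeping then goes through unchanged.
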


%The theorem shows that, as long as $L$ is large enough to satisfy the assumptions, the algorithm would converge.
The proofs of both theorems are given in the supplementary material.
Theorem~\ref{thom2} guarantees the convergence of inexact proximal point method --- as long as the inner iteration number $L$ satisfies 
the given conditions, IPOT and IPOT-WB algorithm would converge linearly. Note that although Theorem~\ref{thom2} manages to prove the linear convergence in inexact case, in practice the conditions is not trivial to verify. In practice we usually just adopt $L=1$. 

Now we know IPOT and IPOT-WB can converge to the exact Wasserstein distance and Wasserstein barycenter. What if an entropic regularization is wanted? Please refer to the supplementary material for how IPOT can achieve regularizations with early stopping.

\begin{figure} 
  \centering 
  \subfigure[$\beta=1$ with different $L$]{ 
    \label{fig:subfig:b} %% label for second subfigure 
    \includegraphics[width=0.38\linewidth]{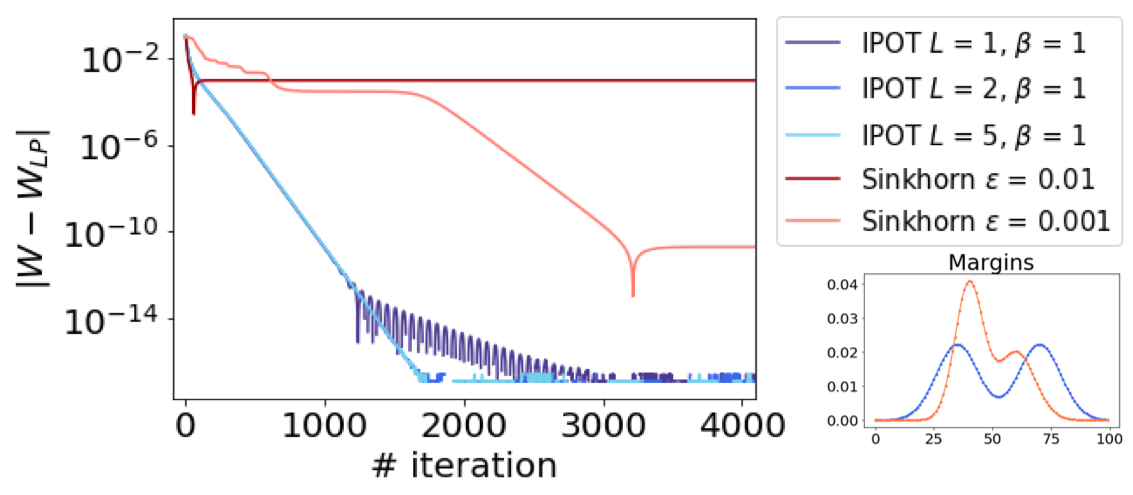}} 
  \subfigure[$L=1$ with different $\beta$]{ 
    \label{fig:subfig:b} %% label for second subfigure 
    \includegraphics[width=0.38\linewidth]{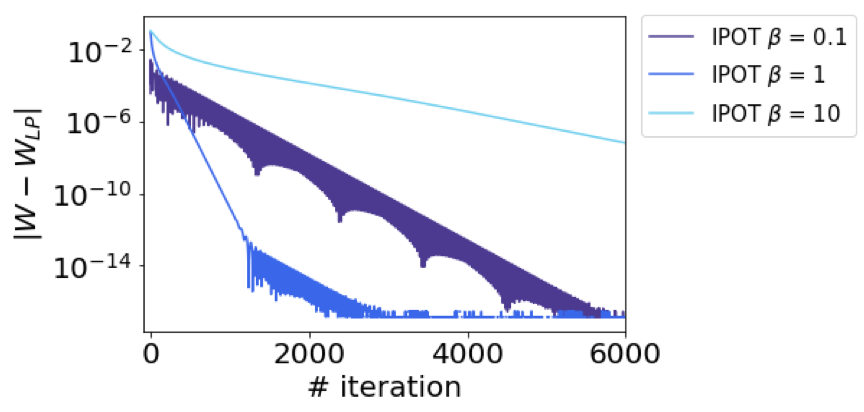}} 
  \caption{\label{fig:convergence} The plots of differences in computed Wasserstein distances w.r.t. number of iterations. Here, $W$ are the Wasserstein distance computed at current iteration. $W_{LP}$ is computed by simplex method, and is used as ground truth. The test adopts $c(x,y) = ||x-y||_2$. (a) The plot of the convergence trajectories of IPOT with different $L$. The right lower figure is the two input margins for the test. We also plot the ones for Sinkhorn method in comparison. (b) The plot of differences in computed Wasserstein distances with different $\beta$.} 
  \label{fig:subfig} %% label for entire figure 
\end{figure}

%%%%%%%%%%%%%%%%%%%%%%%%%%%%%%%%%%%%%%%%%%%%%%%%%%%%%%%%%%%%%%%%%%%%%%%%%%%

\vspace{-5pt}
\section{EMPIRICAL ANALYSIS}
\label{section4}

\vspace{-5pt}

In this section we will illustrate the convergence behavior with respect to inner iteration number $L$ and parameter $\beta$,  the scalability of IPOT, and the issue with entropy regularization.
%Our code for the tests is available at ... 
We leverage the implementation of Sinkhorn iteration and LP solver based on Python package POT \citep{flamary2017pot}, and use Pytorch to parallel some of the implementation.

% %%%%%%%%%%%%%%%%%%%%%%%%%%%%%%%%%%%%%%%%%%%
% % \begin{wrapfigure}{R}{0.5\textwidth}
% % \vspace{-10pt}
% \begin{figure*}
%   \begin{center}
%     \includegraphics[width=0.45\textwidth]{convergence_linear.png}  \end{center}
%   \caption{\label{fig:convergence}The plot of differences in computed Wasserstein distances w.r.t. number of iterations. Here, $W$ are the Wasserstein distance computed at current iteration. $W_{LP}$ is computed by simplex method, and is used as ground truth. The test adopts $c(x,y) = ||x-y||_2$. The right lower figure is the two input margins for the test.}
% %   \vspace{-40pt}
% % \end{wrapfigure}
% \end{figure*}
% %%%%%%%%%%%%%%%%%%%%%%%%%%%%%%%%%%%%%%%%%%%

% %%%%%%%%%%%%%%%%%%%%%%%%%%%%%%%%%%%%%%%%%%%
% \begin{wrapfigure}{R}{0.5\textwidth}
% \vspace{-10pt}
%   \begin{center}
%     \includegraphics[width=0.5\textwidth]{convergence_beta.png}  \end{center}
%      \vspace{-5pt}
%   \caption{\label{fig:convergence_beta}The plot of differences in computed Wasserstein distances with different $\beta$ for the same input as figure \ref{fig:convergence}.}
% %   \vspace{20pt}
% \end{wrapfigure}
% %%%%%%%%%%%%%%%%%%%%%%%%%%%%%%%%%%%%%%%%%%%

%%%%%%%%%%%%%%%%%%%%%%%%%%%%%%%%%%%%%%%%%%%%%%%%%%%%%%%%%%%%%%%%%%%%%%%%%%

\vspace{-5pt}
\subsection{Convergence Rate}
\label{section41}
\vspace{-5pt}

A simple illustration task of calculating the Wasserstein distance of two 1D distribution is conducted as numerical validation of the convergence theorems proved in Section \ref{section32}. The two input margins are mixtures of Gaussian distributions shown in the figure in the right lower of Figure \ref{fig:convergence} (a): the red one is $0.4\phi(\cdot|60,8)+0.6\phi(\cdot|40,6)$, and the blue one is $0.5\phi(\cdot|35,9)+0.5\phi(\cdot|70,9)$, where $\phi(\cdot|\mu,\sigma^2)$ is the probability density function of 1 dimensional Gaussian distribution with mean $\mu$ and variance $\sigma^2$. Input vectors $\bm{\mu}$ and $\bm{\nu}$ is the two function values on the uniform discretization of interval $[1,100]$ with grid size 1.  To be clear, the use of two 1D distribution is only for visualization purpose. We also did tests on empirical distribution of 64D Gaussian distributed data, and the result shows the same trend. We include more discussion in the supplementary material.

Figure \ref{fig:convergence} shows the convergence of IPOT under different $L$ and $\beta$. We also include the result of Sinkhorn method for comparison. IPOT algorithm has empirically linear convergence rate even under very small $L$.

The convergence rate increases w.r.t. $\beta$ when $\beta$ is small, and decreases when $\beta$ is large. This is because the choice of $\beta$ is a trade-off between inner and outer convergence rates.  On the one hand, a smaller $\beta$ usually lead to quicker convergence of proximal point iterations. On the other hand, the convergence of inner Sinkhorn iteration, is quicker when $\beta$ is large. 

Furthermore, the choice of $L$ also appears to be a trade-off. While a larger $L$ takes more resources in each step, it also achieves a better accuracy, so less proximal point iterations are needed to converge. So the choice of best $L$ is relevant to the choice of $\beta$. For large $\beta$, the inner Sinkhorn iteration can converge faster, so smaller $L$ should be used. For small $\beta$, larger $L$ should be used, which is not efficient, and also improve the risk of underflow for the inner Sinkhorn algorithm. So unless there are specific need for accuracy, we do not recommend using very small $\beta$ and large $L$. 

For simplicity, we use $L=1$ for later tests. 

%\vspace{-5pt}
%\subsection{Scalability}
%\label{section42}
%\vspace{-5pt}
%
%\begin{figure}[!h]
%\small
%\centering
%\includegraphics[width=0.42\textwidth]{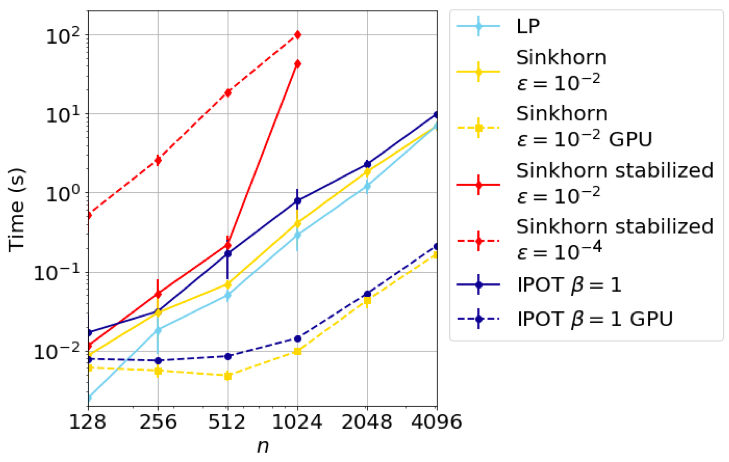}
%\caption{\label{fig:scalability} Log-log plot of average time used to achieve 1e-4 relative precision with error bar. Each point is obtained by the average of 6 tests on different datasets.}
%\end{figure}

\begin{wrapfigure}{R}{0.5\textwidth}
\vspace{-20pt}
  \begin{center}
    \includegraphics[width=0.48\textwidth]{}
    \end{center}
      \vspace{-20pt}
\caption{\label{fig:scalability} Log-log plot of average time used to achieve 1e-4 relative precision with error bar. Each point is obtained by the average of 6 tests on different datasets.}
  \vspace{-0pt}
\end{wrapfigure}

We conduct the following scalability test to show the computation time of the proposed IPOT comparing to the state-of-art benchmarks. The optimal transport problem is conducted between the two empirical distributions of 16D uniformly distributed data (See Section \ref{section21} for formulation). Besides proposed IPOT algorithm (see Algorithm \ref{algo1} with $L=1$), the Sinkhorn algorithm follows~\citet{cuturi2013sinkhorn} and the stabilized Sinkhorn algorithm follows~\citet{chizat2016scaling}. 
The result of the scalability test is shown in Figure \ref{fig:scalability}.
The LP solver has a good performance under the current experiment settings. But LP solver is not guaranteed to have good scalability as shown here. Moreover, LP method is difficult to parallel. Readers who are interested please refer to experiments in~\citet{cuturi2013sinkhorn}.

Sinkhorn and IPOT can be paralleled conveniently, so we provide both CPU and GPU tests here. Under this setting, IPOT takes approximately the same resources as Sinkhorn at $\epsilon=0.01$. For smaller $\epsilon$, original Sinkhorn will underflow, and we need to use stabilized Sinkhorn. Stabilized Sinkhorn is much more expensive than IPOT, especially for large datasets and small $\epsilon$, as demonstrated by the experiment result of stabilized Sinkhorn at $\epsilon=10^{-2}$ and $10^{-4}$. 

Note that we also try to use the method proposed in~\citet{schmitzer2016stabilized} for $\epsilon$ scaling, to help the convergence when $\epsilon \to 0$. However, although it is faster than Sinkhorn method when data size is smaller than $1024$, the time used at 1024 is already around $2\times 10^3$s. Therefore we didn't include this method in the figure.

% \section{Extensions}
% In this section we apply proximal method to generative learning, and extend the idea to solving Wasserstein barycenter. The code is available at ... The implementation of Sinkhorn method is based on Python package POT.

\subsection{Effect of Entropy Regularization}

We have shown that IPOT can converge to exact Wasserstein distance with complexity comparable to Sinkhorn (see Figure \ref{fig:sketch} and \ref{fig:scalability}) and as we claimed in Section \ref{section1} this is important in some of the learning problems.

But in what cases is the exact Wasserstein distance truly needed? How will the entropy regularization term affect the result in different applications? In this section, we will discuss the exact transport plan with sparsity preference and the advantage of exact Wasserstein distance in learning the generative models.

\subsubsection{Sparsity of the Transport Plan}
\label{section43}

\begin{wrapfigure}{R}{0.54\textwidth}
\vspace{-20pt}
  \begin{center}
    \includegraphics[width=0.5\textwidth]{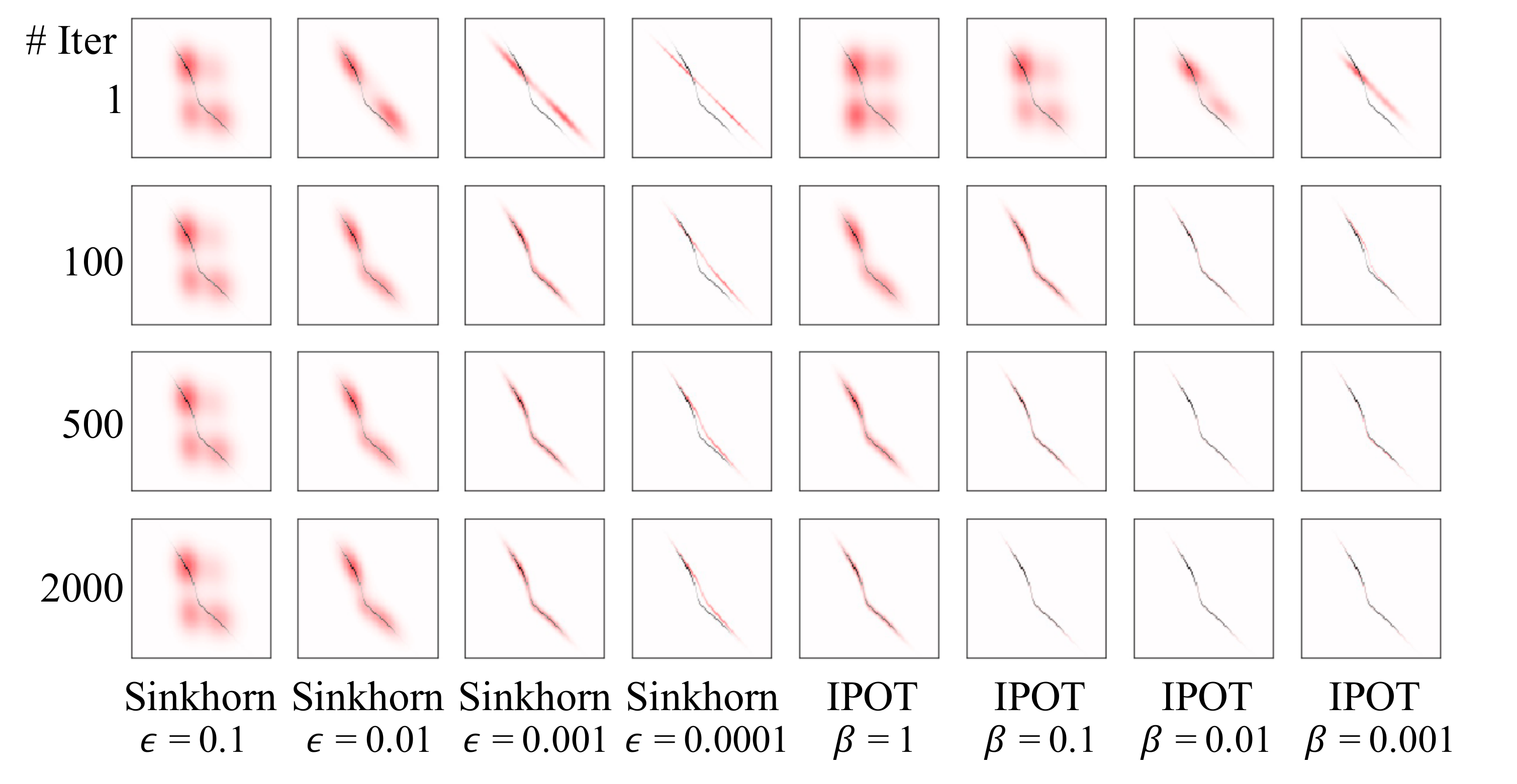} \end{center}
    \vspace{-20pt}
  \caption{\label{fig:joint_dist} The transport plan generated by Sinkhorn and IPOT methods at different iteration number. The red colormap is the result from Sinkhorn or IPOT method, while the black wire is the result of simplex method for comparison. In the right lower plans, the red and the black is almost identical.}
  \vspace{-0pt}
\end{wrapfigure}

%\begin{figure}[!t]
%\small
%\centering
%\hspace*{-0.5cm}
%\includegraphics[width=0.53\textwidth]{joint_dist.pdf}
%
%\caption{\label{fig:joint_dist1} The transport plan generated by Sinkhorn and IPOT methods at different iteration number. The red colormap is the result from Sinkhorn or IPOT method, while the black wire is the result of simplex method for comparison. In the right lower plans, the red and the black is almost identical.}
%\end{figure}

In applications such as histogram calibration and color transferring, an  exact and sparse transport plan is wanted. In this section we conduct tests on the sparsity of the transport plan using the two distributions shown in Figure \ref{fig:convergence} for both IPOT and Sinkhorn methods with different regularization coefficients.
Figure \ref{fig:joint_dist} visualize the different transport plans. The red colormap is the result from Sinkhorn or IPOT method, where the black wire beneath is the result by simplex method as ground truth.  
To be clear, the different number of interaction of IPOT means the number of the outer iteration with still $L=1$ inner iteration.

 The proposed IPOT method can always converge to the sparse ground truth with enough iteration and it is very robust with respect to the parameter $\beta$, i.e., there is little visual difference with $\beta$ changing from $0.1$ to $0.001$. Furthermore, even with large $\beta=1$, the optimal plan is still sparse and acceptable. In addition, if some smoothness is wanted, IPOT method would also be able to work with early stopping. The degree of smoothness can be easily adjusted by adjusting the number of iterations if needed.

On the other hand, the optimal plans obtained by Sinkhorn has two issues. If the $\epsilon$ is chosen to be large (i.e., $\epsilon=0.1$ or $0.01$), the optimal plan are blur i.e.,  neither exact nor sparse.
%A larger $\epsilon$ leads to more blur {\color{red}(i.e.,  neither exact nor sparse)} in the Sinkhorn plans. 
In downstream applications, the non-sparse structure of transport plan make it difficult to extract the transport map from source distribution to target distribution. 
However if the $\epsilon$ is chosen to be small (i.e., $\epsilon=0.0001$), it needs more iterations to converge. For example, the Sinkhorn $\epsilon=0.0001$ case still cannot converge after 2000 iterations. 
So in Sinkhorn applications, $\epsilon$ needs to be selected carefully.
This fine tuning issue can be avoid by the proposed IPOT method, since IPOT is robust to the parameter $\beta$.

%Moreover, we observe the result of IPOT first converge to a map with blur that satisfying the margins in a few steps, then converge to a optimal map without blur. This suggest for application that require a smoother solution, IPOT method would also be able to work with early stopping. The degree of smoothness can be easily adjusted by adjusting the number of iterations. 

\begin{figure}[!t]
\centering
\includegraphics[width=0.95\textwidth]{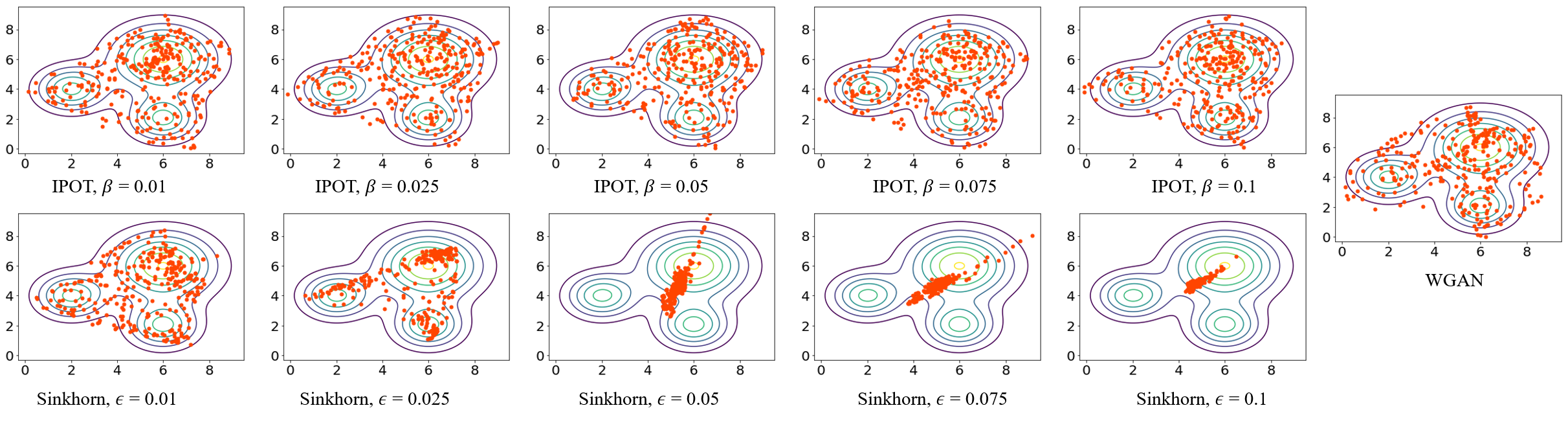}
\vspace{-0.2in}
\caption{\label{fig:generator2D} The sequences of learning results using IPOT, Sinkhorn, and original WGAN. In each figure, the orange dots are samples of generated data, while the contour represents the ground truth distribution. 
}
\vspace{-0.1in}
\end{figure}

\begin{figure}[!t]
\small
\centering
\subfigure[Sinkhorn $\epsilon=1$: digits 0,1,3,7,8,9 are covered]{
\includegraphics[width=0.4\linewidth]{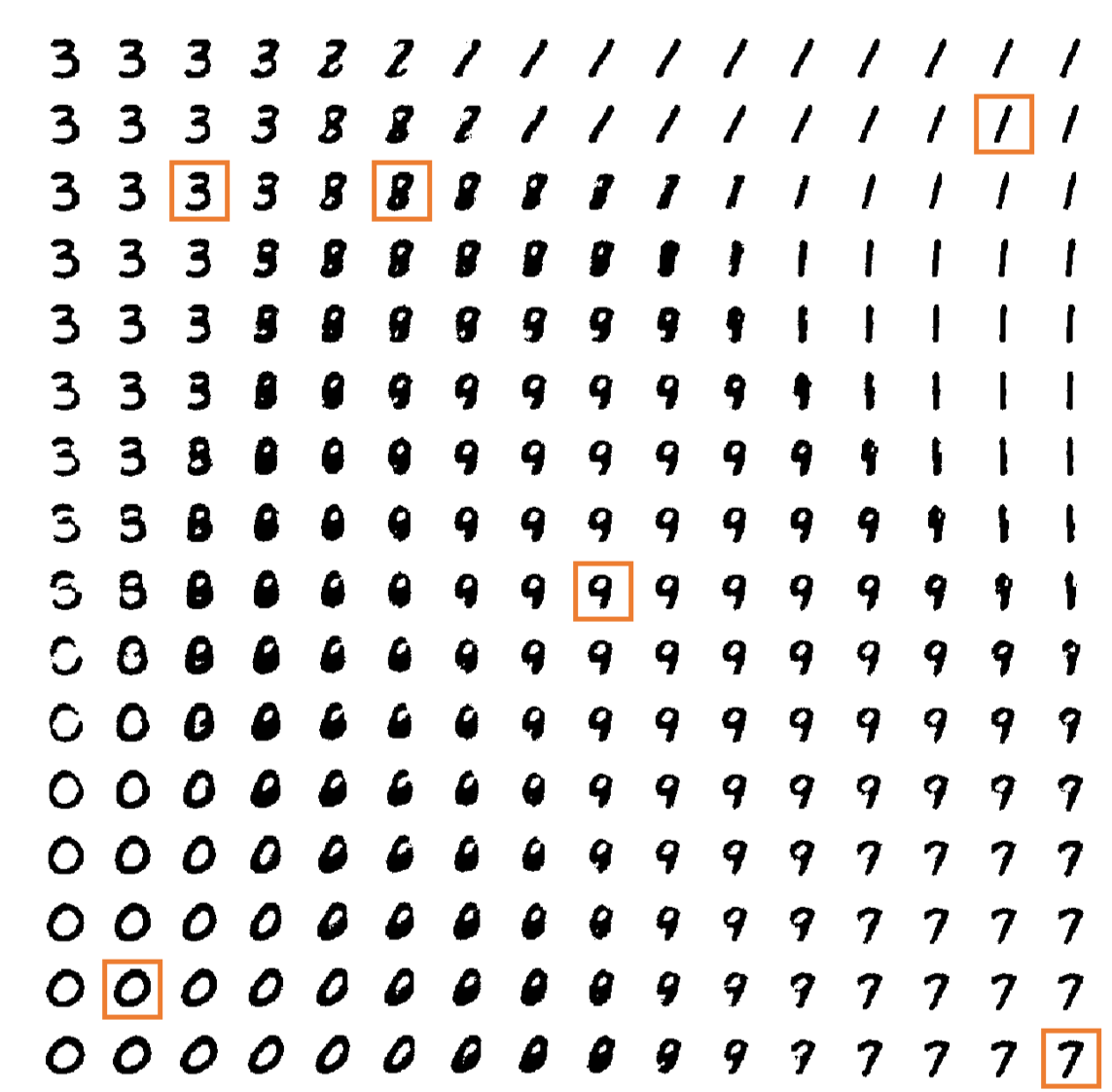}
} 
\subfigure[IPOT $\beta=1$: all digits are covered]{
\includegraphics[width=0.4\linewidth]{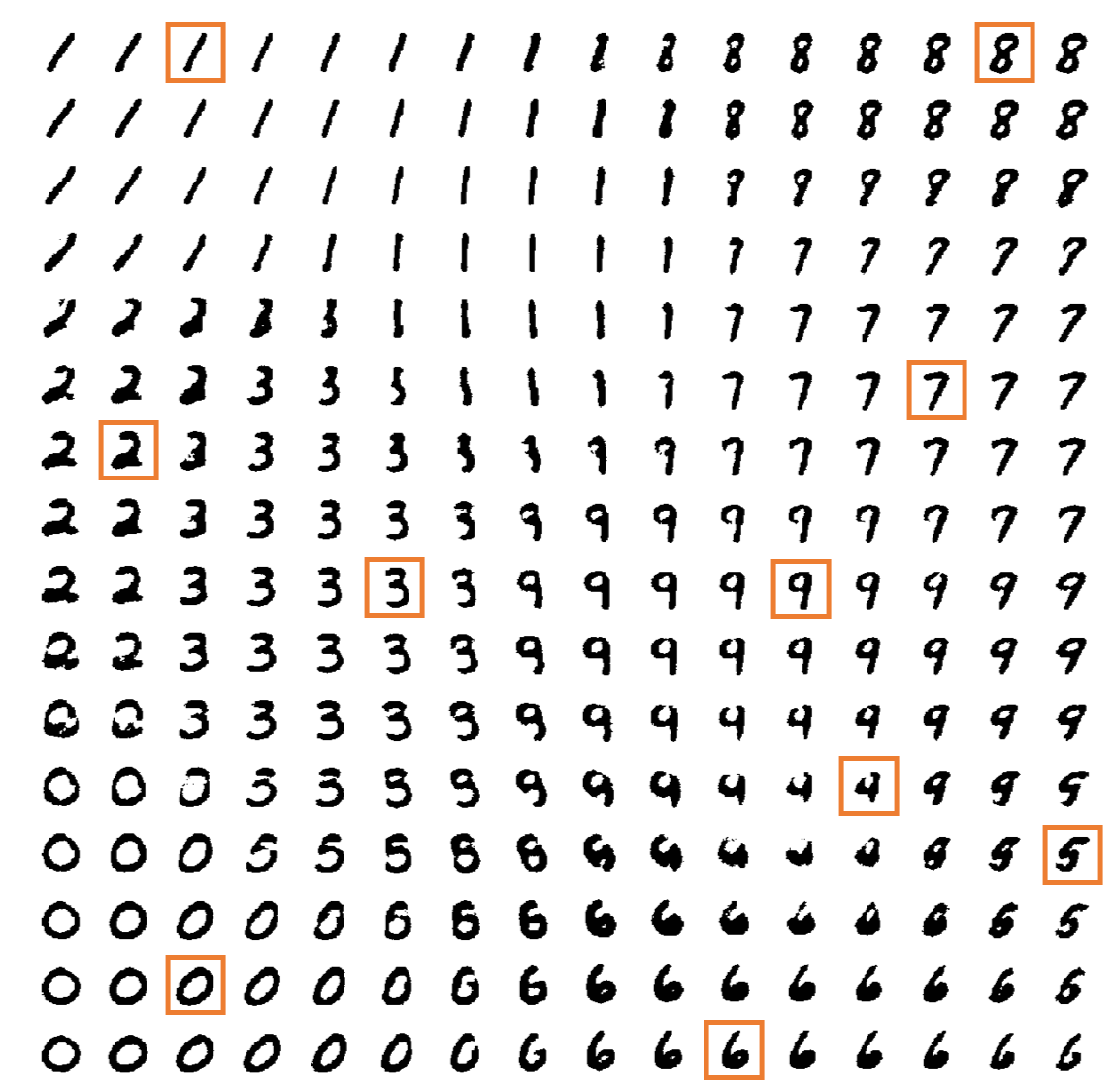}
}
\caption{\label{fig:generative_mnist} Plots of MNIST learning result under comparable resources. They both use batch size=200, number of hidden layer=1, number of nodes of hidden layer=500, number of iteration=200, learning rate = 1e-4. }
\end{figure}
\subsubsection{Shrinkage Problem in Generative Models}

\label{section432}
As shown in Equation (\ref{eq:sinkhorn}), Sinkhorn method use entropy to penalize the optimization target and has biased evaluation of Wasserstein distance. The inaccuracy will affect the performance of the learning problem where Wasserstein metric is served as loss function.

In order to better illustrate the affect of the inaccurate Wasserstein distance, we consider the task of learning generative models, specifically, Wasserstein GAN \citep{arjovsky2017wasserstein}. 
Similar to other GAN, WGAN seeks to learn a generated distrubution to approximate a target distribution, except using Wasserstein distance as the loss that measures the distance between the generated distribution and target distribution.
It uses the Kantorovitch dual formulation to compute Wasserstein distance.

In this section, we train a Wasserstein GAN with the dual formulation substituted by Sinkhorn and IPOT methods.
Detailed derivation can be found in supplementary material.  Meanwhile, the standard approach of using dual form proposed in \citet{arjovsky2017wasserstein} is also compared. Note that the purpose of this section is not to propose a new GAN but visualize how proposed IPOT can avoid the possible negative influence introduced by the inaccuracy of the entropy regularization in the Sinkhorn method.

%Wasserstein GAN \cite{arjovsky2017wasserstein} has achieved large success in generating visually authentic images. With Wasserstein distance as the loss function, the generative networks are more stable and easier to train. In this section, we explore the possibility to substitute the dual formulation in \cite{arjovsky2017wasserstein} with Sinkhorn and IPOT using empirical distribution. The full derivation can be found in supplementary material. 

We claim that result of Sinkhorn method with moderate size $\epsilon$ tends to shrink towards the mean, so the learned distribution cannot cover all the support of target distribution. To demonstrate the reason of this trend, consider the extreme condition when $\epsilon \to \infty$, the loss function becomes
\[
\bm{\Gamma^*} = \argmin_{\bm{\Gamma}} h(\bm{\Gamma}) = \argmin_{\bm{\Gamma}} D_h(\bm{\Gamma},\bm{11}^T/mn).
\]
So $\bm{\Gamma^*} = \bm{11}^T/mn$. If we view $\{x_i\}$ and $\{y_j:y_j=g_{\theta}(z_j)\}$ as the realizations of random variables $X$ and $Y$, the optimal Sinkhorn distance $W_{\epsilon}$ is expected to be
\begin{align*}
\mathbb{E}_{X,Y}[W_{\epsilon}] & = \mathbb{E}_{X,Y}[\langle \bm{\Gamma^*}, \bm{C} \rangle ]\\
& = \mathbb{E}_{X,Y}[ \sideset{}{_{i,j}} \sum ||x_i-y_j||_2^2] \\
& = n^2(\text{Var}(X) + (\overbar{X}-\overbar{Y})^2 +\text{Var}(Y)) ,
\end{align*}
where $n$ is the data size, $\overbar{(\cdot)}$ is the mean of random variable, and $\text{Var}(\cdot)$ is the variance. At the minimum of the distance, the mean of generated data $\{y_j\}$ is the same as $\{x_i\}$, but the variance is zero. Therefore, the learned distribution would shrink asymptotically toward the data mean due to smoothing the effect of regularization.

However, the proposed IPOT method is free from the above shrinking issue since the exact Wasserstein distance can be found with the approximately the same cost (see Section \ref{section41} and Section \ref{section42}). Now we illustrate the shrinkage problem by the following experiments.

\textit{\textbf{Experiments on 2D Synthetic Data} }
% First, we do a 2D toy example to demonstrate the affect of regularization. We use a 2D-2D NN as generator to learn a mapping from uniformly distributed noise to mixture of Gaussian distributed real data. IPOT uses 200 iterations and Sinkhorn uses 500 iterations. 
%First, we do a 1D toy example. See figure \ref{fig:generator1D} for results. The source and target data both follows uniform distribution. The test use 50000 uniformly distributed data. The generators are all 1D-1D NN with one hidden layer. The inner iteration number of IPOT and Sinkhorn are both set to be 200, except in Sinkhorn $\epsilon=0.01$ case we use 500 because fewer iterations cannot generate reasonable results. While IPOT can typically generate good result in 1 min, Sinkhorn and dual methods usually takes at least tens of minutes. 
%We try to balance the resources used by each method when tuning the parameters, so that both Sinkhorn and dual methods can converge within 1 h. It is still possible that there are better parameter setting that we fail to find. 
%The choice of $\epsilon$ is based on the range provided in~\cite{genevay2017sinkhorn}. 
%Although when $\epsilon=0.01$ the code sometimes runs into overflow, we adopt the version without stabilizing because it is quicker. 
%The implementation of dual method (i.e. WGAN method) follows~\cite{arjovsky2017wasserstein}.
% Figure \ref{fig:generator2D} shows the results. As $\epsilon$ varies from $0.01$ to $0.1$, the learned distribution of Sinkhorn gradually shrinks to the mean of target distribution, while the result of IPOT is robust to different $\beta$.
First, we conduct a 2D toy example to demonstrate the affect of regularization. We use a 2D-2D NN as generator to learn a mapping from uniformly distributed noise to mixture of Gaussian distributed real data. Since as shown in Figure \ref{fig:convergence}, Sinkhorn may need more iteration to converge, in this experiment, IPOT uses 200 iterations and Sinkhorn uses 500 iterations.

Figure \ref{fig:generator2D} shows the results. As $\epsilon$ varies from $0.01$ to $0.1$, the learned distribution of Sinkhorn gradually shrinks to the mean of target distribution, again this is because the inaccurcy in calculating the Wasserstein distance. On the contrary, since IPOT can converge to the exact Wasserstein distance regardless of different $\beta$, the result robustly cover the whole support of target distribution. Furthermore, comparing to the dual form method used in the WGAN, the proposed IPOT method is better in small scale cases and can achieve similar performance in large scale cases \citep{genevay2017sinkhorn}. This is mainly because the discriminator neural network used in WGAN is susceptible to overfitting in low dimensional cases, and it exceeds the objective of this paper.

\textit{\textbf{Experiments on Higher Dimensional Data} }
% For higher dimensional data, we cannot visualize the distribution as 2D data. So in order to demonstrate the shrinkage problem, we adopt 2D latent space, and plot the images generated by dense grid points on the latent space. 
% We use MNIST dataset with a generator $g_{\theta}:\mathbb{R}^2 \mapsto \mathbb{R}^{784}$, noise data $\{z_j\}\sim \text{Unif}([0,1]^2)$ as input, and one fully connected hidden layer with 500 nodes. 
%It is difficult to compare among generated images. By pure manual observation, we conclude that the two generative models have comparable performance. 
% Figure \ref{fig:generative_mnist} shows an example of generated results. The Sinkhorn results look authentic, but we can only find some of the digits in it. This is exactly the consequence of shrinkage - in the domain where the density of learned distribution is nonzero, the density of target distribution is usually nonzero; but in some part of the domain where the density of target distribution is nonzero, the learned distribution is zero. In the example of Figure \ref{fig:generative_mnist}, the learned distribution is zero in the domain of digits 2,4,5,6. In supplementary material we provide more examples. If a larger $\epsilon$ is used, Sinkhorn generator would shrink to one point, and hence cannot learn anything.
For higher dimensional data, we cannot visualize the final generated distribution as done in the 2D test. %distribution as 2D data. 
%So in order to demonstrate the shrinkage problem, we adopt 2D latent space, and plot the images generated by dense grid points on the latent space. 
 So in order to demonstrate IPOT has little shrinkage issue, we set the latent space to be 2D, and visualize it by plotting the images generated at dense grid points on the latent space. Due to the low dimensional latent space, We perform the experiment using MNIST dataset. Note that this is mainly for the convenience in visualization, the whole shrinkage-free property of IPOT method is also extendable to more complex learning problems. Associated with the MNIST dataset, we use a generator
%We use MNIST dataset with a generator 
$g_{\theta}:\mathbb{R}^2 \mapsto \mathbb{R}^{784}$, noise data $\{z_j\}\sim \text{Unif}([0,1]^2)$ as input, and one fully connected hidden layer with 500 nodes. 
%It is difficult to compare among generated images. By pure manual observation, we conclude that the two generative models have comparable performance. 

Figure \ref{fig:generative_mnist} shows an example of generated results. The Sinkhorn results look authentic, but we can only find some of the digits in it. This is exactly the consequence of shrinkage due to the inaccurate calculation of Wasserstein distance - in the domain where the density of learned distribution is nonzero, the density of target distribution is usually nonzero; but in some part of the domain where the density of target distribution is nonzero, the learned distribution is zero. In the example of Figure \ref{fig:generative_mnist} (a), the learned distribution cannot cover the support of digits 2,4,5,6 while when using IPOT to calculate the Wasserstein loss, all ten digits are can be recovered in \ref{fig:generative_mnist} (b), which shows the coverage of the whole domain of the target distribution. 
%is zero in the domain of digits 2,4,5,6. 
In supplementary material we provide more examples, e.g., if a larger $\epsilon$ is used, Sinkhorn generator would shrink to one point, and hence cannot learn anything, while the IPOT method is robust to its parameter $\beta$ and covers more digits.

% In IPOT results, we can find all 10 digits, while in Sinkhorn results only some of the digits can be found. 

% IPOT generator maps the latent space to all ten digits, while Sinkhorn generator suffers shrinkage and cannot cover all ten digits. If a larger $\epsilon$ is used (see supplement), Sinkhorn generator would shrink to one point, and hence cannot learn anything.

%This test is for demonstration purpose to argue for the potential of the method to prevent shrinkage, so we do not pursue a performance of state-of-art. For future improvement, a more sophisticated network should be adopted. Also, Euclidean distance as we used here might not be able to describe the structure of high dimension data, especially RGB images. A proper distance such as the one in \cite{genevay2017sinkhorn} or \cite{salimans2018improving} would largely improve the capability of the learning model.

%%%%%%%%%%%%%%%%%%%%%%%%%%%%%%%%%%%%%%%%%%%%%%%%%%%%%%%%%%%%%%%%%%%%%%%%%%

% \begin{wrapfigure}{r}{0.6\textwidth}
\begin{figure}[t]
% \vspace{-60pt}
  \begin{center}
  \hspace*{-0.5cm}
    \includegraphics[width=0.7\textwidth]{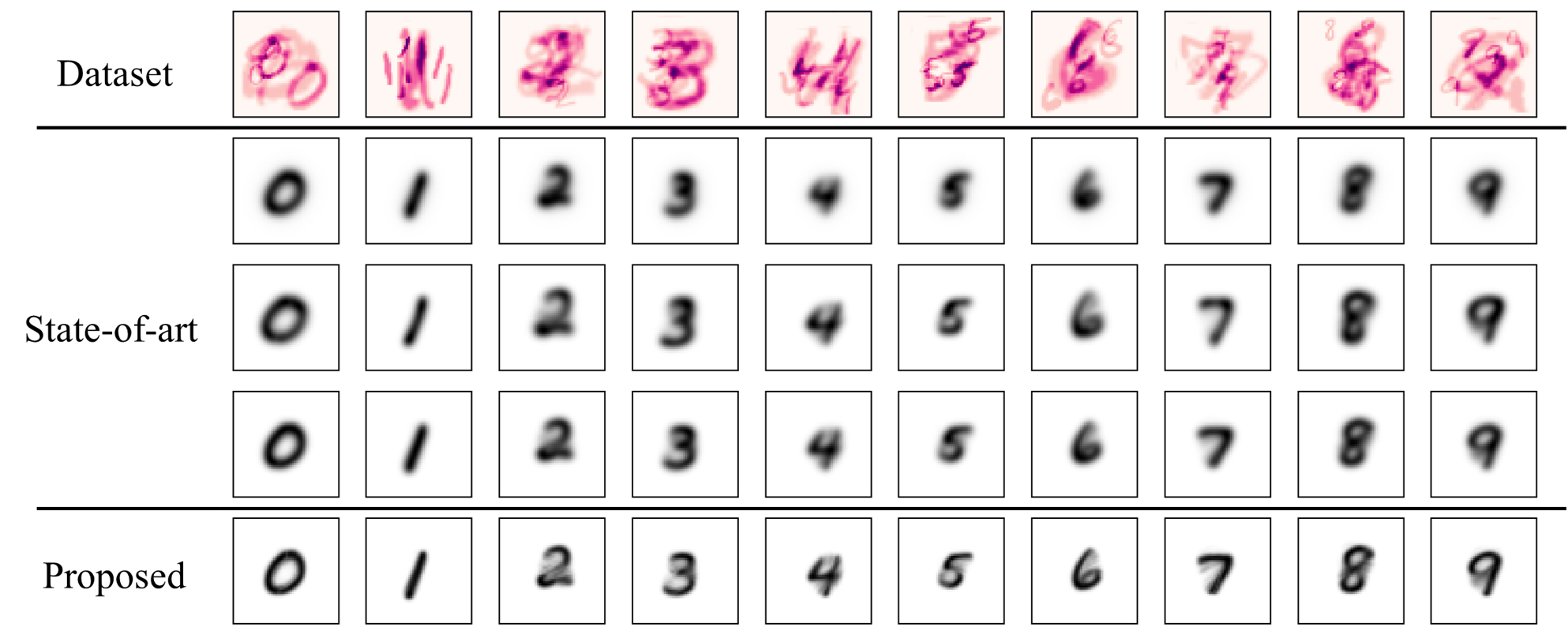}  
    \end{center}
    \vspace{-20pt}
  \caption{\label{fig:sample} The result of barycenter. For each digit, we randomly choose 8 of 50 scaled and shifted images to demonstrate the input data. From the top to the bottom, we show (top row) the demo of input data; (second row) the results based on~\citet{cuturi2014fast}; (third row) the result based on~\citet{solomon2015convolutional}; (fourth row) the result based on~\citet{benamou2015iterative}; (bottom row) the results based on IPOT-WB.
}
\vspace{-10pt}
\end{figure}
% \vspace{-20pt}
% \end{wrapfigure}

\subsection{Computing Barycenter}

We test our proximal point barycenter algorithm on MNIST dataset, borrowing the idea from \citet{cuturi2014fast}. Here, the images in MNIST dataset is randomly uniformly reshape to half to double of its original size, and the reshaped images have random bias towards corner. After that, the images are mapped into $50\times 50$ grid. For each digit we use 50 of the reshaped images with the same weights as the dataset to compute the barycenter. All results are computed using 50 iterations and under $\epsilon, \beta=0.001$. So for proximal point method, the regularization is approximately the same as $\epsilon=2\times 10^{-5}$, which is pretty small. We compare our method with state-of-art Sinkhorn based methods \citet{cuturi2014fast}, \citet{solomon2015convolutional} and \citet{benamou2015iterative}. Among the four methods, the convolutional method \citep{solomon2015convolutional} is different in terms of that it only handles structural input tested here and does not require $O(n^2)$ storage, unlike other three general purpose methods.
%With the first one derived from R package "Barycenter" based on the work in~\cite{cuturi2014fast} and the second one in Python package "ot" based on the work in~\cite{benamou2015iterative}. And lastly, we use algorithm \ref{algo_proximal} to compute the barycenter of the same data for comparison. 

We are also aware of that there are other literatures for Wassersetin barycenter, such as \citet{staib2017parallel} and \citet{claici2018stochastic}, but they are targeting a more complicated setting, and has a different convergence rate (i.e. sublinear rate) than the methods we provide here.
The results (Figure \ref{fig:sample}) from proximal point algorithm are clear, while the results of Sinkhorn based algorithms suffer blurry effect due to entropic regularization. 
While the time complexity of our method is in the same order of magnitude with Sinkhorn algorithm~\citep{benamou2015iterative}, the space complexity is $K$ times of it, because $K$ different transport maps need to be stored. This might cause pressure to memory for large $K$. Therefore, a sequential method is needed. We left this to future work.

%%%%%%%%%%%%%%%%%%%%%%%%%%%%%%%%%%%%%%%%%%%%%%%%%%%%%%%%%%%%%%%%%%%%%%%%%%

\section{CONCLUSION}
%%%%%%%%%%%%%%%%%%%%%%%%%%%%%%%%%%%%%%%%%%%%%%%%%%%%%%%%%%%%%%%%%%%%%%%%%%

We proposed a proximal point method - IPOT - based on Bregman distance to solve optimal transport problem. Different from the Sinkhorn method, IPOT algorithm can converge to ground truth even if the inner optimization iteration only performs once. This nice property results in similar convergence and computation time comparing to Sinkhorn method. However, IPOT provides a robust and accurate computation of Wasserstein distance and associated transport plan, which leads to a better performance in image transformation and avoids the shrinkage in generative models.
We also apply the IPOT idea to calculate the Wasserstein barycenter. The proposed method can generate much sharper results than state-of-art due to the exact computation of the Wasserstein distance.

\section*{ACKNOWLEDGEMENT}

This work is partially supported by the grant NSF IIS 1717916, NSF CMMI 1745382, NSFC 61672231, NSFC U1609220 and 19ZR1414200.

% \small
\bibliographystyle{ims}
\bibliography{example_paper}
\newpage
\onecolumn

\par\noindent\rule{\textwidth}{1.4pt}

\vspace{5pt}

\begin{center}
    {\large{\textbf{A Fast Proximal Point Method for Computing Exact Wasserstein Distance -- Appendix}}}
\end{center}

\par\noindent\rule{\textwidth}{0.6pt}

\appendix
\section{More Analysis on IPOT}

%%%%%%%%%%%%%%%%%%%%%%%%%%%%%%%%%%%%%%%%%%%
\begin{wrapfigure}{r}{0.54\textwidth}
\vspace{-20pt}
  \begin{center}
    \includegraphics[width=0.5\textwidth]{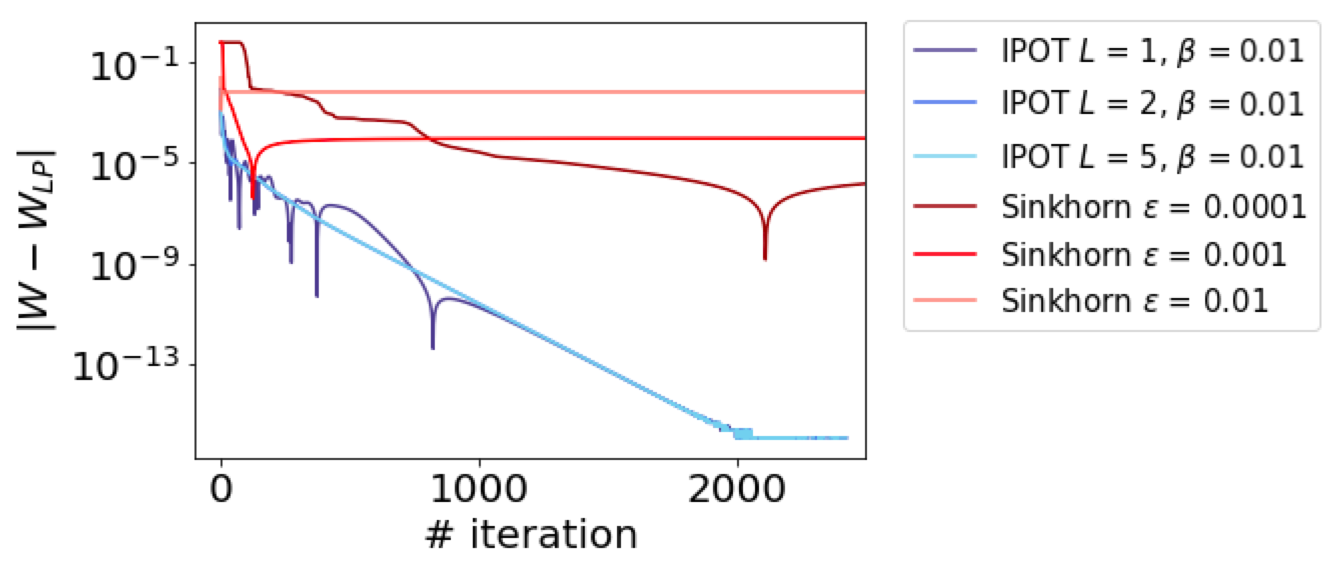}  \end{center}
      \vspace{-20pt}
  \caption{\label{fig:convergence64}The plot of differences in computed Wasserstein distances w.r.t. number of iterations for 64D Gaussian distributed data. Here, $W$ are the Wasserstein distance computed at current iteration. $W_{LP}$ is computed by simplex method, and is used as ground truth. The test adopts $c(x,y) = ||x-y||_2$. Due to random data is used, the number of iteration that the algorithm reaches $10^{-17}$ varies from 1000 to around 5000 according to our tests.}
  \vspace{-10pt}
\end{wrapfigure}
%%%%%%%%%%%%%%%%%%%%%%%%%%%%%%%%%%%%%%%%%%%

\subsection{Convergence w.r.t. $L$}

As mentioned in Section 6.1, we provide the test result of 64D Gaussian distributed data here. We choose the computed Wasserstein distance $\langle \bm{\Gamma}, \bm{C} \rangle$ as the indicator of convergence, because while the optimal transport plan might not be unique, the computed Wasserstein distance at convergence must be unique and minimized to ground truth. We use the empirical distribution as input distributions, i.e.,

\begin{eqnarray}
\begin{aligned}
&W(\{x_i\},\{g_{\theta}(z_j)\})= \sideset{}{_{\bm{\Gamma}}}\min \langle \bm{C}(\theta),\bm{\Gamma} \rangle \quad\\
&\text{s.t. } \bm{\Gamma} \pmb{1}_n = \frac{1}{n}\pmb{1}_n, \bm{\Gamma}^T \pmb{1}_n = \frac{1}{n} \pmb{1}_n.
\end{aligned}
\end{eqnarray}

As shown in Figure \ref{fig:convergence64}, the convergence rate is also linear. For comparison, we also provide the convergence path of Sinkhorn iteration. The result cannot converge to ground truth because the method is essentially regularized.

{\bf Remark.} When we are talking about amount of regularization, usually we are referring to the magnitude of $\epsilon$ for Sinkhorn, or the equivalent magnitude of $\epsilon$ computed from remark in Section 3 for IPOT method. However, the amount of regularization in a loss function should be quantified by $\epsilon/||\bm{C}||$, instead of $\epsilon$ alone. That is why in this paper, different magnitude of $\epsilon$ is used for different application.

\subsection{How IPOT Avoids Instability}

 Heuristically, if Sinkhorn does not underflow, with enough iteration, the result of IPOT is approximately the same as Sinkhorn with $\epsilon^{(t)}=\beta/t$. 
The difference lies in IPOT is a principled way to avoid underflow and can converge to arbitrarily small regularization, while Sinkhorn always causes numerical difficulty when $\epsilon\to 0$, even with scheduled decreasing $\epsilon$ like \citet{chizat2016scaling}. More specifically, in IPOT, we can factor $\Gamma=\text{diag}(\bm{u}_1)\bm{G}^{t}\text{diag}(\bm{u}_2)$, where $(\cdot)^t$ is element-wise exponent operation, and $\bm{u}_1$ and $\bm{u}_2$ are two scaling vectors. So we have $\epsilon^{(t)}=\beta/t$. As $t$ goes infinity, all entries of $\bm{G}^{t}$ would underflow if we use Sinkhorn with $\epsilon^{(t)}=\beta/t$. But we know $\bm{\Gamma}^*$ is neither all zeros nor contains infinity. So instead of computing $\bm{G}^{t}$, $\bm{u}_1$ and $\bm{u}_2$ directly, we use $\bm{\Gamma}^{t}$ to record the multiplication of $\bm{G}^t$ with part of $\bm{u}_1$ and $\bm{u}_2$ in each step, so the entries of $\bm{\Gamma}^{t}$ will not over/underflow. The explicit computation of $\bm{G}^{t}$ is not needed.

Therefore,  by tuning $\beta$ and iteration number, we can achieve the result of arbitrary amount of regularization with IPOT.

\section{Learning Generative Models}
In this section, we show the derivation for the learning algorithm, and more tests result.
%The WGAN method \cite{arjovsky2017wasserstein} uses Wasserstein distance in learning generative models. With Wasserstein distance as the loss function, the generative networks are more stable and easier to train. The IPOT algorithm provides an alternative way to compute the Wasserstein distance, instead of the dual formulation used in \cite{arjovsky2017wasserstein}. 

For simplicity, we assume $|\{x_i\}|=|\{z_j\}|=n$. 
Given a dataset $\{x_i\}$ and some noise $\{z_j\}$ \citep{bassetti2006minimum, goodfellow2014generative}, our goal is to find a parameterized function $g_{\theta}(\cdot)$ that minimize $W(\{x_i\}, \{g_{\theta}(z_j)\})$, 
\begin{eqnarray}\label{lossG}
\begin{aligned}
W(\{x_i\},\{g_{\theta}(z_j)\}) & = \sideset{}{_{\bm{\Gamma}}}\min \langle \bm{C}(\theta),\bm{\Gamma} \rangle \quad \\
& \text{s.t. } \bm{\Gamma} \pmb{1}_n = \frac{1}{n}\pmb{1}_n, \bm{\Gamma}^T \pmb{1}_n = \frac{1}{n} \pmb{1}_n,
\end{aligned}
\end{eqnarray}
where $\bm{C}(\theta)=[c(x_i,g_{\theta}(z_j))]$.% and $\bm{\Gamma}=[\Gamma_{ij}]$. 
Usually, $g_{\theta}$ is parameterized by a neural network with parameter $\theta$, and the minimization over $\theta$ is done by stochastic gradient descent.

%This problem can be solved by alternating optimization. 
In particular, given current estimation $\theta$, we can obtain optimum $\bm{\Gamma}^*$ by IPOT, and compute the Wasserstein distance by $\langle \bm{C}(\theta),\bm{\Gamma}^* \rangle$ accordingly. 
Then, we can further update $\theta$ by the gradient of current Wasserstein distance. There are two ways to solve the gradient: One is auto-diff based method such as \citet{genevay2017sinkhorn}, the other is based on the envelope theorem~\citep{afriat1971theory}. Different from the auto-diff based methods, the back-propagation based on envelope theorem does not go into proximal point iterations because the derivative over $\bm{\Gamma}^*$ is not needed, which accelerates the learning process greatly. This also has significant implications numerically because the derivative of a computed quantity tends to amplify the error. Therefore, we adopt envelope based method.

\begin{theorem}
\textbf{Envelope theorem.} Let $ f(x,\theta )$ and $l(x)$ be real-valued continuously differentiable functions, where $x\in \mathbb {R} ^{n}$ are choice variables and $ \theta \in \mathbb {R} ^{m}$ are parameters. Denote $x^*$ to be the optimal solution of $f$ with constraint $l=0$ and fixed $\theta$, i.e.
\[ x^* = \argmin _{x}f(x,\theta ) \quad s.t. \quad  l(x)= 0.\]
Then, assume that $V$ is continuously differentiable function defined as $V(\theta )\equiv f(x^{\ast }(\theta ),\theta)$, the derivative of $V$ over parameters is 
\[ \frac {\partial V(\theta)}{\partial \theta}=\frac{\partial f}{\partial \theta}.\]
\end{theorem}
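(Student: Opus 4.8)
The plan is to prove the envelope theorem directly, by combining the chain rule with the first-order optimality (Lagrange) conditions of the constrained problem. First I would write the value function $V(\theta) = f(x^*(\theta),\theta)$ and apply the chain rule to obtain the total derivative
\[
\frac{\partial V(\theta)}{\partial \theta} = \frac{\partial f}{\partial x}\Big|_{x^*(\theta)}\,\frac{\partial x^*(\theta)}{\partial \theta} + \frac{\partial f}{\partial \theta}\Big|_{x^*(\theta)}.
\]
The whole content of the theorem is that the first term on the right-hand side vanishes, so that only the \emph{direct} dependence of $f$ on $\theta$ survives. The strategy is therefore to show that $\frac{\partial f}{\partial x}\,\frac{\partial x^*}{\partial \theta}=0$ at the optimum.

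Next I would introduce the Lagrangian $\mathcal{L}(x,\lambda,\theta) = f(x,\theta) + \lambda^\top l(x)$ and invoke the stationarity condition that holds at the constrained minimizer, namely $\frac{\partial f}{\partial x} + \lambda^\top \frac{\partial l}{\partial x} = 0$ evaluated at $x^*(\theta)$ with the associated multiplier $\lambda=\lambda(\theta)$. Simultaneously, since $x^*(\theta)$ stays feasible as $\theta$ varies, the identity $l(x^*(\theta)) = 0$ holds on an open neighbourhood of parameters; differentiating it in $\theta$ gives $\frac{\partial l}{\partial x}\,\frac{\partial x^*}{\partial \theta} = 0$. Combining the two facts closes the argument:
\[
\frac{\partial f}{\partial x}\,\frac{\partial x^*}{\partial \theta}
= -\,\lambda^\top \frac{\partial l}{\partial x}\,\frac{\partial x^*}{\partial \theta}
= 0,
\]
so that $\frac{\partial V}{\partial \theta} = \frac{\partial f}{\partial \theta}$ evaluated at $(x^*(\theta),\theta)$, as claimed. (The same computation goes through verbatim for a vector-valued constraint, with $\frac{\partial l}{\partial x}$ read as a Jacobian, which is the case relevant to the marginal constraints in \eqref{lossG}.)

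The main obstacle is not the algebra, which is short, but the regularity needed to legitimize each step: one needs $x^*(\theta)$ to be a locally well-defined, single-valued, and differentiable function of $\theta$ before the chain rule and the differentiation of the constraint identity even make sense. This is precisely what the implicit function theorem supplies, provided a constraint qualification holds (e.g. $\frac{\partial l}{\partial x}$ of full rank at $x^*$) together with a second-order sufficiency condition ensuring $x^*(\theta)$ is a strict local minimizer. The theorem statement circumvents part of this difficulty by assuming outright that $V$ is continuously differentiable. I would therefore either state the differentiability of $x^*(\theta)$ as an explicit hypothesis (as the statement implicitly does through the smoothness of $V$) or derive it from the implicit function theorem, after which the remaining derivation reduces to the two-line computation above.
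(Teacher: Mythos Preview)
Your derivation is correct and is the standard Lagrangian argument for the constrained envelope theorem: chain rule, stationarity $\nabla_x f + \lambda^\top \nabla_x l = 0$, and differentiation of the feasibility identity $l(x^*(\theta))=0$ combine to kill the indirect term. Your discussion of the regularity hypotheses (constraint qualification, implicit function theorem, and the paper's shortcut of simply assuming $V$ is $C^1$) is also on point.

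However, there is nothing to compare against: the paper does not prove this theorem. It is stated as a classical result with a citation to \citet{afriat1971theory} and then immediately applied to the optimal-transport loss $\langle \bm{\Gamma},\bm{C}(\theta)\rangle$. So your proposal supplies a proof where the paper offers none; the argument you give is exactly the textbook one and would be an appropriate addition if a self-contained proof were desired.
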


In our case, because $\bm{\Gamma}^*$ is the minimization of $\langle \bm{\Gamma},\bm{C}(\theta)\rangle$ with constraints, we have
\begin{align*}
& \frac{\partial W(\{x_i\},\{g_{\theta}(z_j)\})}{\partial \theta}  = \frac {\partial \langle \bm{\Gamma}^*,\bm{C}(\theta)\rangle}{\partial \theta} \\
& = \langle \bm{\Gamma}^*,{\frac {\partial \bm{C}(\theta)}{\partial \theta}}\rangle=\langle \bm{\Gamma}^*,2(g_{\theta}(z_j)-x_i){\frac {\partial g_{\theta}(z_i)}{\partial \theta}}\rangle,
\end{align*}

where we assume $C_{ij}(\theta)=\|x_i-g_{\theta}(z_j)\|_2^2$, but the algorithm can also adopt other metrics. The derivation is in supplementary materials. The flowchart is shown in Figure \ref{fig:arche_more}, and the algorithm is shown in Algorithm~\ref{algo_gan}.

Note Sinkhorn distance is defined as $S(\{x_i\},\{g_{\theta}(z_j)\})= \langle \bm{C}(\theta),\bm{\Gamma}^* \rangle$,
where $\bm{\Gamma}^*=\sideset{}{_{\bm{\Gamma}\in \Sigma( \pmb{1}/n,\pmb{1}/n)}}\argmin \langle \bm{C}(\theta),\bm{\Gamma} \rangle+\epsilon h(\bm{\Gamma})$. If Sinkhorn distance is used in learning generative models, envelope theorem cannot be used because the loss function for optimizing $\theta$ and $\bm{\Gamma}$ is not the same. 
%We also try using entropically regularized Wasserstein distance as loss function. The regularization term causes much more blur in the following MNIST test, and severer shrinkage problem.

In the tests, we observe the method in~\citet{genevay2017sinkhorn} suffers from shrinkage problem, i.e. the generated distribution tends to shrink towards the target mean. The recovery of target distribution is sensitive to the weight of regularization term $\epsilon$. Only relatively small $\epsilon$ can lead to a reasonable generated distribution.

%   \begin{wrapfigure}{R}{0.55\textwidth}
%     \begin{minipage}{0.55\textwidth}
%     \vspace{-0.6in}
      \begin{algorithm}[H]
       \caption{Learning generative networks}
	\label{algo_gan}       
	 \begin{algorithmic}
          \STATE \textbf{Input:} real data $\{x_i\}$, initialized generator $g_\theta$
	\WHILE{not converged}
\STATE Sample a batch of real data $\{x_i\}_{i=1}^n$
\STATE Sample a batch of noise data $\{z_j\}_{i=1}^n \sim q$
\STATE $C_{ij}:=c(x_i,g_{\theta}(z_j)):=||x_i-g_{\theta}(z_j)||^2_2$
\STATE $\bm{\Gamma}$ = IPOT($\frac{1}{n}\bm{1}_n,\frac{1}{n}\bm{1}_n,\bm{C}$)
\STATE Update $\theta$ with $\langle \bm{\Gamma},[2(x_i-g_{\theta}(z_j)){\frac {\partial g_{\theta}(z_j)}{\partial \theta}}]\rangle$
\ENDWHILE       
 \end{algorithmic}
      \end{algorithm}
%       \vspace{-0.4in}
%     \end{minipage}
%   \end{wrapfigure}

%%%%%%%%%%%%%%%%%%%%%%%%%%%%%%%%%%%%%%%%%%%
\begin{wrapfigure}{R}{0.4\textwidth}
\vspace{-20pt}
  \begin{center}
    \includegraphics[width=0.35\textwidth]{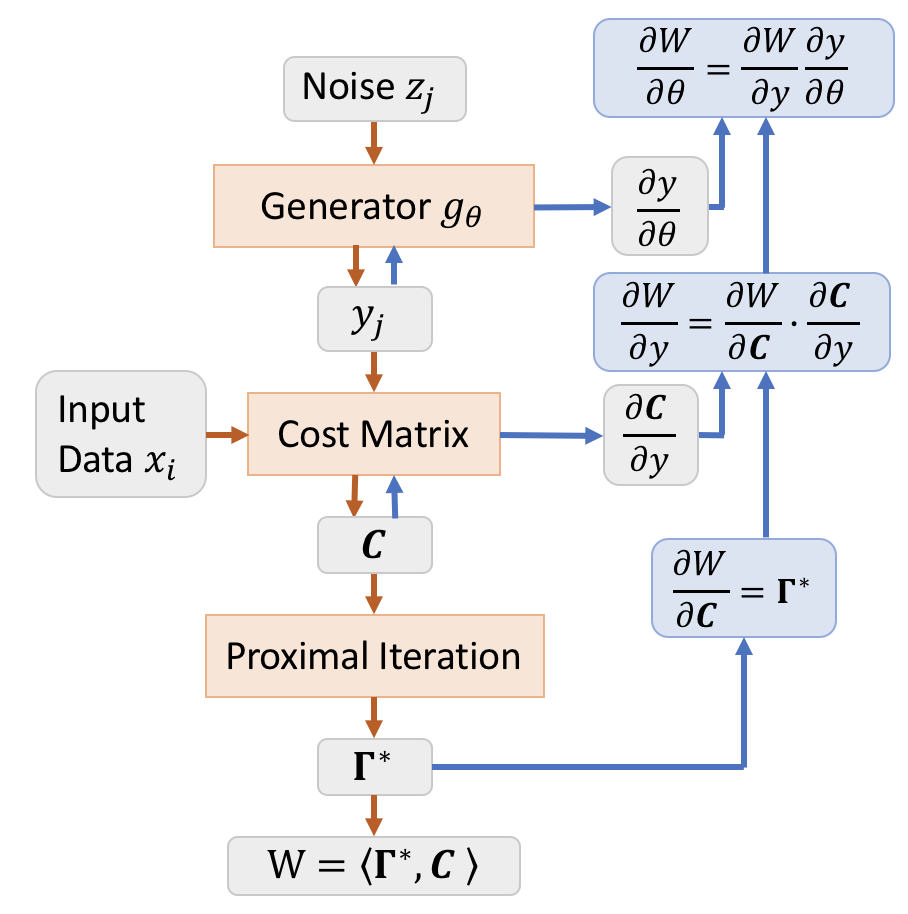} \end{center}
  \caption{\label{fig:arche_more} The architecture of the learning model using Envelope theorem in detail. According to Envelope theorem, we do not need to compute $\frac{\partial W}{\partial \bm{\Gamma}^*}$, so we do not need to back-propagate into the iteration.
}
  \vspace{-10pt}
\end{wrapfigure}
%%%%%%%%%%%%%%%%%%%%%%%%%%%%%%%%%%%%%%%%%%%

\subsection{Synthetic Test}

In section 5.1, we show the learning result of Sinkhorn and IPOT in 2D case. In Figure \ref{fig:generator1D_comp} we show sequences of results for a 1D-1D generator, respectively. The upper sequence is IPOT with $\beta=0.01,0.025,0.05,0.075,0.1$. The results barely change w.r.t. $\beta$. The lower sequence is the corresponding Sinkhorn results. The results shrink to the mean of target data, as expected. Also, we observe the learned distribution tends to have a tail that is not in the range of target data (also in 2D result, we do not include that part for a better view). It might be because the range of support that has a small probability has very small gradient when updated. Once the distribution is initialized to have a tail with small probability, it can hardly be updated. But this theory cannot explain why larger $\epsilon$ corresponds to longer tails. The tails can be on the left or right. We pick the ones on the left for easier comparison.

\begin{figure*}
\centering
\includegraphics[width=1.\textwidth]{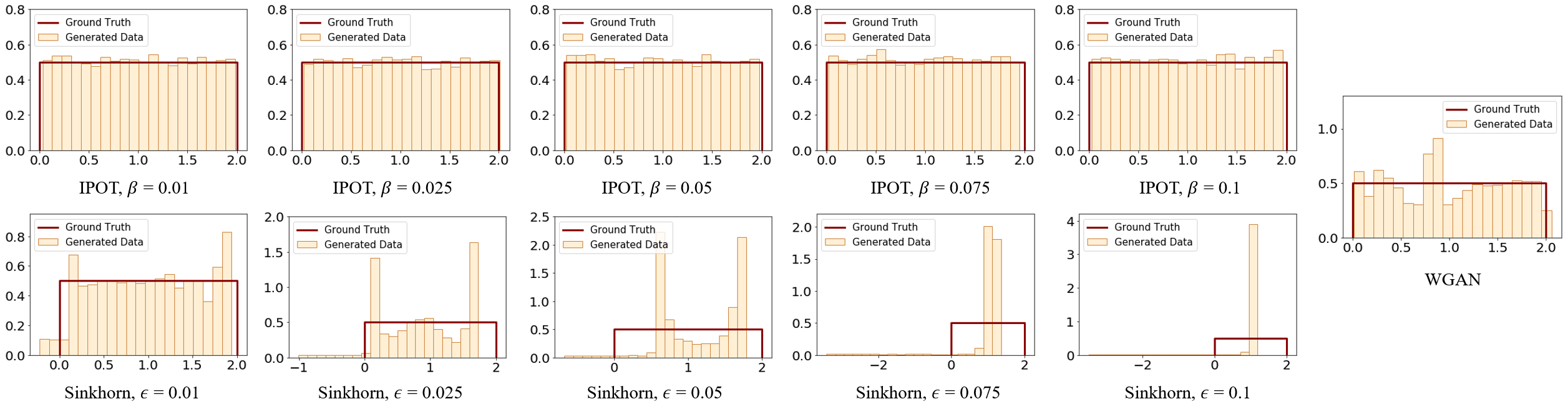}
\caption{\label{fig:generator1D_comp} The sequences of learning result of IPOT, Sinkhorn. In each figure, the orange histogram is the histogram of generated data, while the red line represents the PDF of the ground truth of target distribution. 
}
\end{figure*}

\begin{figure*}
\centering

\subfigure[IPOT $\beta=0.1$]{
\includegraphics[width=0.31\linewidth]{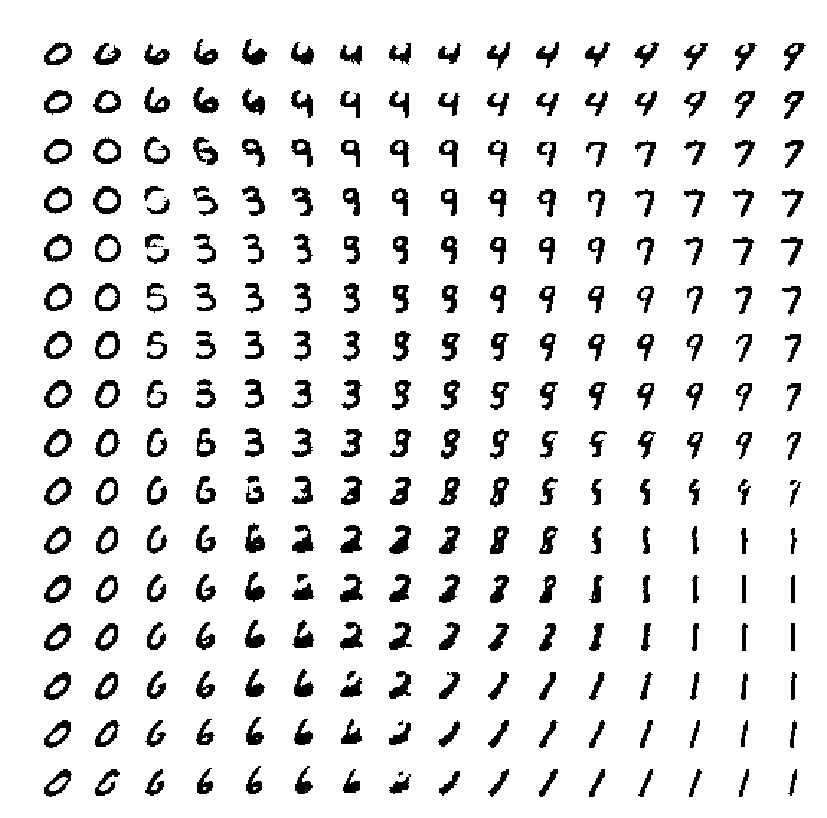}
} 
\subfigure[IPOT $\beta=1$]{
\includegraphics[width=0.31\linewidth]{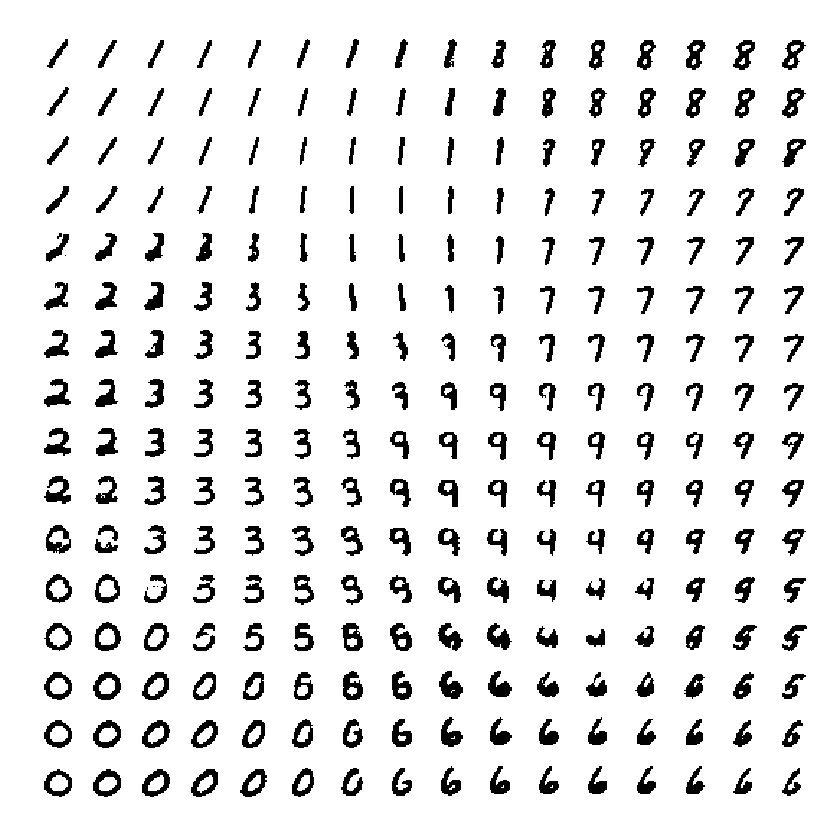}
} 
\subfigure[IPOT $\beta=10$]{
\includegraphics[width=0.31\linewidth]{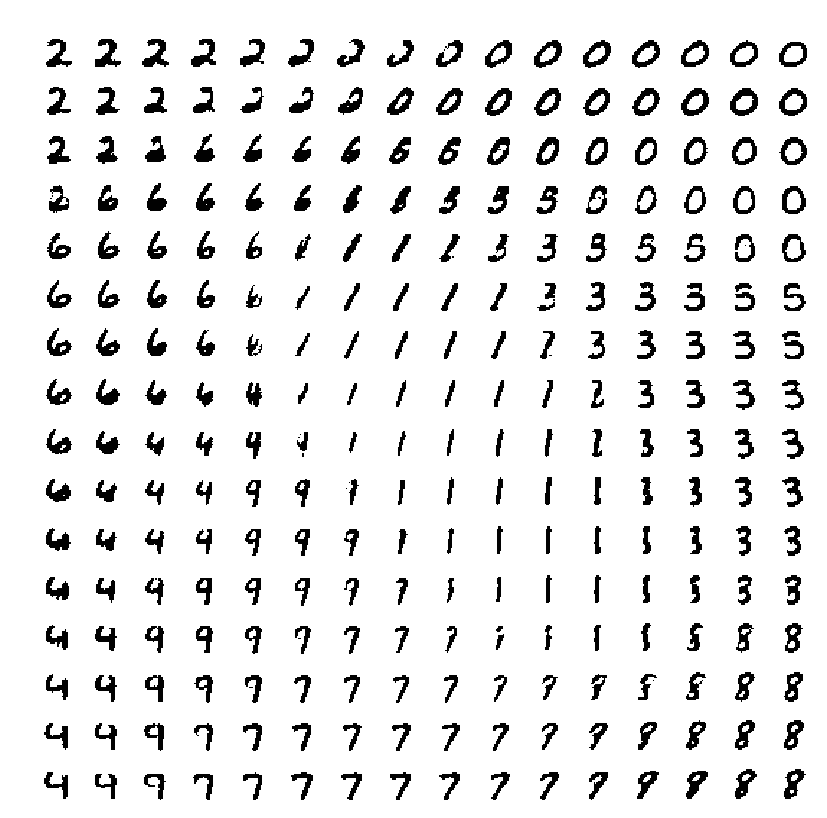}
} 
\subfigure[Sinkhorn $\epsilon=0.1$]{
\includegraphics[width=0.31\linewidth]{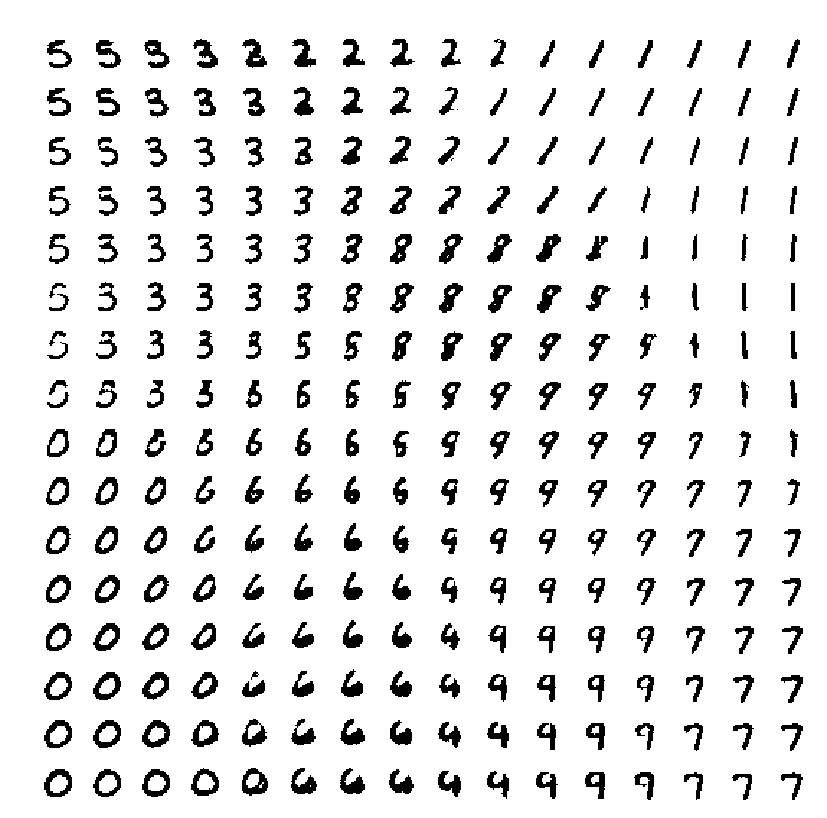}
} 
\subfigure[Sinkhorn $\epsilon=1$]{
\includegraphics[width=0.31\linewidth]{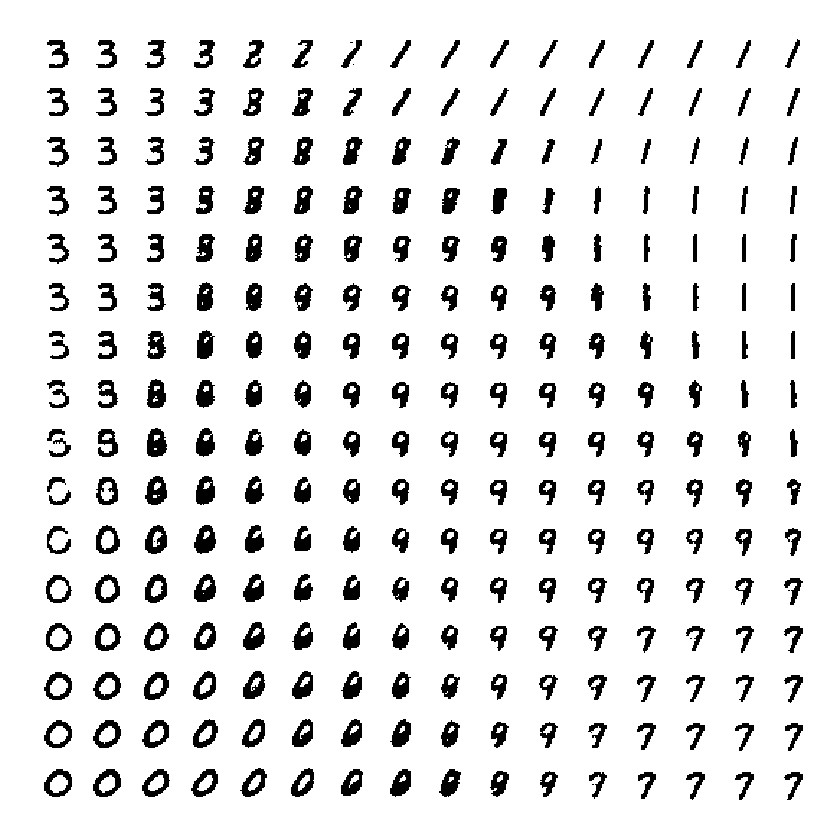}
} 
\subfigure[Sinkhorn $\epsilon=10$]{
\includegraphics[width=0.31\linewidth]{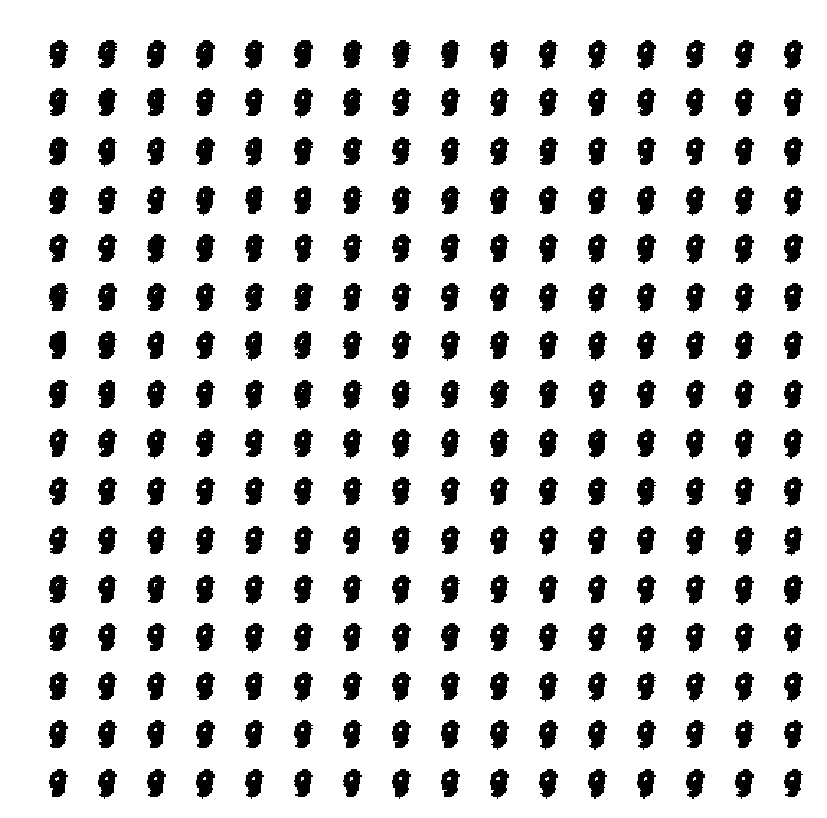}
} 
\caption{\label{fig:generative_mnist_comp} Plots of MNIST learning result under comparable resources with different $\epsilon$. They both use batch size=200, number of hidden layer=1, number of nodes of hidden layer=500, number of iteration=500, learning rate = 1e-4. Note that despite we show result of $\epsilon=0.1$ here, the algorithm does not run stably. It would sometimes fail due to numerical issue. }
\end{figure*}

\subsection{MNIST Test}

The same shrinkage can be observed in MNIST data as well. See figure \ref{fig:generative_mnist_comp}. While $\epsilon=0.1$ covers most shapes of the numbers, $\epsilon=1$ only covers a fraction, and $\epsilon=10$ seems to cover only the mean of images.

\begin{figure}
\centering
\subfigure[Original image]{
\includegraphics[width=0.25\linewidth]{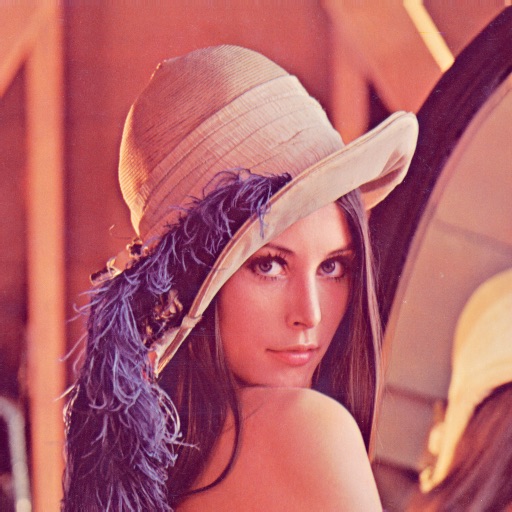}
}
\subfigure[Color image]{
\includegraphics[width=0.25\linewidth]{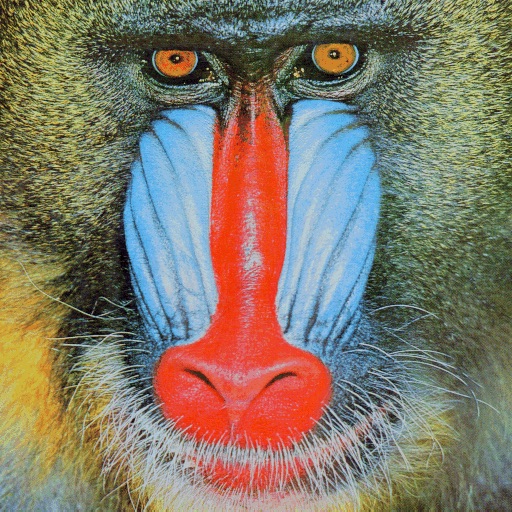}
}
\subfigure[Simplex]{
\includegraphics[width=0.25\linewidth]{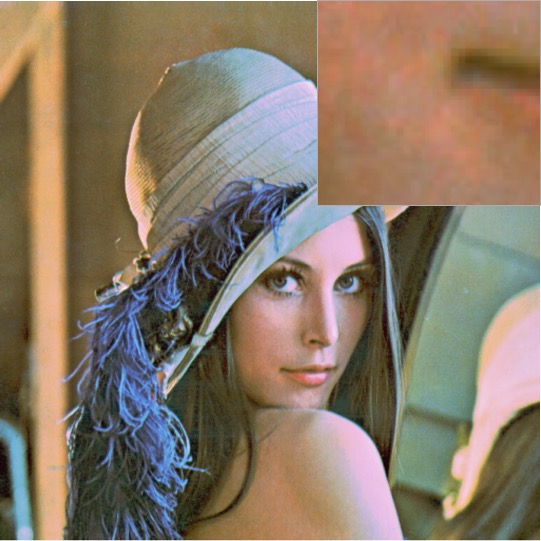}
}
\subfigure[Sinkhorn $\epsilon$=$10^{-2}$]{
\includegraphics[width=0.25\linewidth]{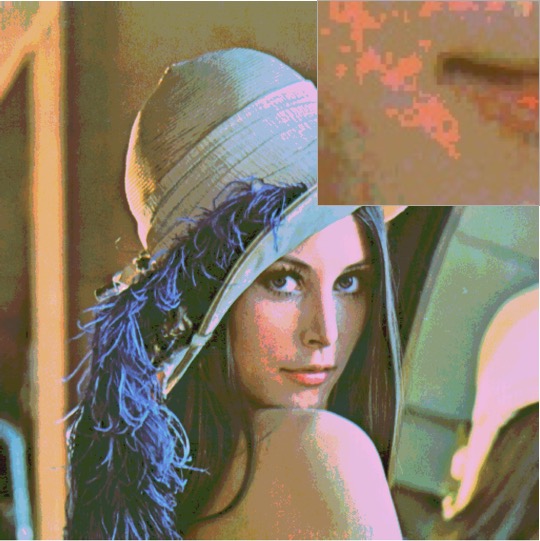}
}
\subfigure[Sinkhorn $\epsilon$=$10^{-3}$]{
\includegraphics[width=0.25\linewidth]{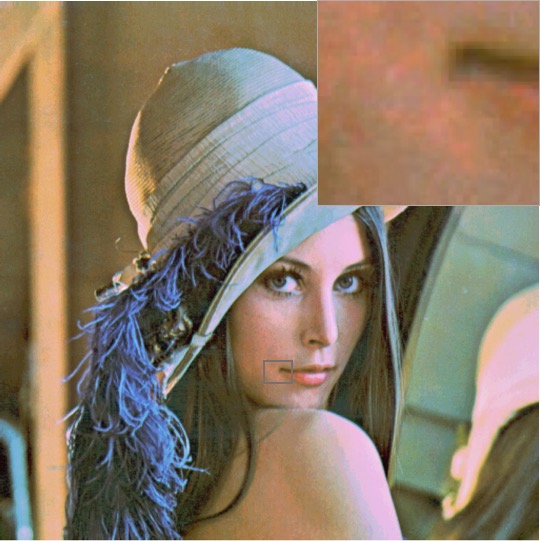}
} 
\subfigure[IPOT]{
\includegraphics[width=0.25\linewidth]{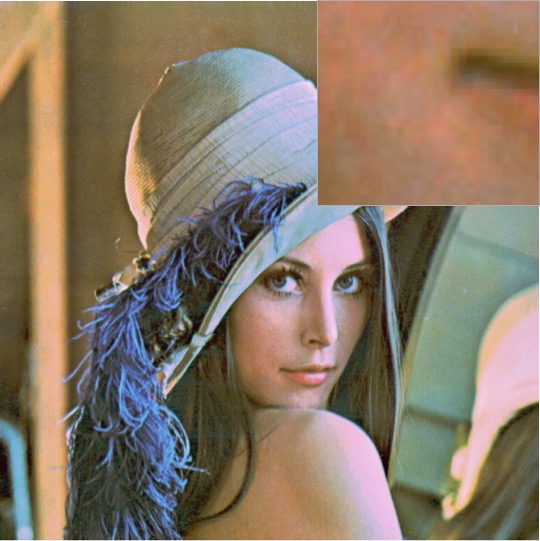}
}
\caption{\label{fig:color_transfer} An example of color transferring. The right upper corner of each generated image shows the zoom-in of the color detail of the mouth corner.}
\end{figure}

\section{Color Transferring}

Optimal transport is directly applicable to many applications, such as color transferring and histogram calibration. We will show the result of color transferring and why accurate transport plan is superior to entropically regularized ones.

The goal of color transferring is to transfer the tonality of a target image into a source image. This is usually done by imposing the histogram of the color palette of one image to another image. Since \citet{reinhard2001color}, many methods~\citep{rabin2014adaptive,xiao2006color} are developed to do so by learning the transformation between the two histograms. Experiments in~\citet{rubner1998metric} have shown that transformation based on optimal transport map outperforms state-of-the-art techniques for challenging images.

Same as other prime-form Wasserstein distance solvers~\citep{pele2009fast,cuturi2013sinkhorn}, the proximal point method provide a transport plan. By definition, the plan is a transport from the source distribution to a target one with minimum cost. Therefore it provides a way to transform a histogram to another.

One example is shown in figure \ref{fig:color_transfer}. We use three different maps to transform the RGB channels, respectively. For each channel, there are at most 256 bins. Therefore, using three channels separately is more efficient than treating the colors as 3D data. Figure \ref{fig:color_transfer} shows proximal point method can produce identical result as linear programming at convergence, while the results produced by Sinkhorn method differ w.r.t. $\epsilon$.

\section{General Bregman Proximal Point Algorithm}
%%%%%%%%%%%%%%%%%%%%%%%%%%%%%%%%%%%%%%%%%%%
In the main body of the paper, we discussed the proximal point algorithm with specific Bregman distance, which is generated through the traditional entropy function.
In this section, we generalize our results by proving the effectiveness of proximal point algorithm with general Bregman distance.
Bregman distance is applied to measure the discrepancy between different matrices which turns out to be one of the key ideas in regularized optimal transport problems.
Its special structure also give rise to proximal-type algorithms and projectors in solving optimization problems.

%%%%%%%%%%%%%%%%%%%%%%%%%%%%%%%%%%%%%%%%%%
\subsection{Basic Algorithm Framework and Preliminaries}

The fundamental iterative scheme of general Bregman proximal point algorithm can be denoted as
%We use iterative method to minimize a function $f$. With general Bregman proximal point method, we consider:
\begin{equation}\label{Bregman-iteration}
x^{(t+1)} = \arg\min_{x\in X}\left\{ f(x) + \beta^{(t)} D_h (x , x^{(t)}) \right\},
\end{equation}
%And for our Optimal transport background setting, we focus on the specific situation of:
%\begin{align*}\label{proximal}
%\bm{\Gamma}^{(t+1)} &= \sideset{}{_{\bm{\Gamma}\in\Sigma(\bm{\mu},\bm{\nu})}}\argmin\langle \bm{C}, \bm{\Gamma} \rangle + \beta^{(t)} D_h(\bm{\Gamma},\bm{\Gamma}^{(t)}) \\
%&= \sideset{}{_{\bm{\Gamma}\in\Sigma(\bm{\mu},\bm{\nu})}}\argmin\langle \bm{C}, \bm{\Gamma} \rangle + \beta^{(t)}\big(h(\bm{\Gamma})-h(\bm{\Gamma}^{(t)})-\langle \nabla h(\bm{\Gamma}^{(t)}), \bm{\Gamma} - \bm{\Gamma}^{(t)} \rangle\big) \\
%&= \sideset{}{_{\bm{\Gamma}\in\Sigma(\bm{\mu},\bm{\nu})}}\argmin\langle \bm{C}-\beta^{(t)}\nabla h(\bm{\Gamma^{(t)})}, \bm{\Gamma} \rangle + \beta^{(t)}\big(h(\bm{\Gamma})-h(\bm{\Gamma}^{(t)})+\langle \nabla h(\bm{\Gamma}^{(t)}), \bm{\Gamma}^{(t)} \rangle\big) \\
%&= \sideset{}{_{\bm{\Gamma}\in\Sigma(\bm{\mu},\bm{\nu})}}\argmin\langle \bm{C}-\beta^{(t)}\nabla h(\bm{\Gamma^{(t)})}, \bm{\Gamma} \rangle + \beta^{(t)}h(\bm{\Gamma}) \\
%\end{align*}
where $t\in \mathbb{N}$ is the index of iteration, and $D_h(x,x^{(t)})$ denotes a general Bregman distance between $x$ and $x^{(t)}$ based on a Legendre function $h$ (The definition is presented in the following).
In the main body of the paper, $h$ is specialized as the classical entropy function and as follows the related Bregman distance reduces to the generalized KL divergence.
Furthermore, the Sinkhorn-Knopp projection can be introduced to compute each iterative subproblem.
%, we show the result in terms of KL divergence and update each iteration with Sinkhorn-Knopp projection. 
% For general Bregman divergence, based on work(\cite{polson2015proximal}), we can use proximal gradient method to update each iteration. 
% \begin{enumerate}
% \item \textbf{Gradient step.} Define an intermediate point $\bm{\Gamma^\prime}$ by taking a gradient step with respect to $h$:
% $$
% \bm{\Gamma^\prime}=\bm{\Gamma^{(t)}}-\gamma\nabla h(\bm{\Gamma^{(t)}})
% $$
% \end{enumerate}
%%%%%%%%%%%%%%%%%%%%%%%
In the following, we present some fundamental definitions and lemmas.
\begin{definition}
Legendre function: Let $h: X\rightarrow (-\infty, \infty]$ be a lsc proper convex function. It is called
\begin{enumerate}
\item {\em{Essentially smooth}}: if $h$ is differentiable on $\hbox{int}\,\hbox{dom}\, h$, with moreover $\| \nabla h(x^{(t)}) \|\rightarrow \infty$ for every sequence $\{ x^{(t)} \}\subset \hbox{int}\,\hbox{dom}\, h$ converging to a boundary point of $\hbox{dom}\, h$ as $t\rightarrow +\infty$;
\item {\em{Legendre type}}: if $h$ is essentially smooth and strictly convex on $\hbox{int}\,\hbox{dom}\, h$.
\end{enumerate}
\end{definition}
%%%%%%%%%%%%%%%%%%%%%%%
\begin{definition}
Bregman distance: any given Legendre function $h$,
	\begin{equation}\label{Bregman-distance}
	D_h (x,y) = h (x) - h (y) - \langle \nabla h(y), x - y \rangle,\quad \forall x\in \hbox{dom}\,h, \forall y\in \hbox{int}\,\hbox{dom}\, h,
	\end{equation}where $D_h$ is strictly convex with respect to its first argument. Moreover, $D_h (x,y)\ge 0$ for all $(x,y)\in \hbox{dom}\, h\times \hbox{int}\, \hbox{dom}\, h$, and it is equal to zero if and only if $x = y$.
	However, $D_h$ is in general asymmetric, i.e., $D_h(x,y)\ne D_h(y,x)$.
\end{definition}
%%%%%%%%%%%%%%%%%%%%%%%
\begin{definition}
Symmetry Coefficient: Given a Legendre function $h: X\rightarrow (-\infty,\infty]$, its symmetry coefficient is defined by
\begin{equation}\label{Symmetry-coefficient}
\alpha (h) = \inf\left\{ \frac{D_h (x,y)}{D_h (y,x)}\ \big|\ (x,y)\in \hbox{int}\, \hbox{dom}\, h\times \hbox{int}\, \hbox{dom}\, h,\ x\ne y \right\}\in [0,1].
\end{equation}
\end{definition}
%%%%%%%%%%%%%%%%%%%%%%%
\begin{lemma}
Given $h:X\rightarrow (-\infty,+\infty]$, $D_h$ is general Bregman distance, and $x$, $y$, $z\in X$ such that $h(x)$, $h(y)$, $h(z)$ are finite and $h$ is differentiable at $y$ and $z$,
\begin{equation}\label{three_point_lemma}
D_h (x,z) - D_h (x,y) - D_h (y,z) = \left\langle \nabla h(y) - \nabla h (z), x - y \right\rangle
\end{equation}
\end{lemma}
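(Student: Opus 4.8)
The plan is to prove the identity \eqref{three_point_lemma} by direct substitution of the definition of the Bregman distance \eqref{Bregman-distance} into each of the three terms on the left-hand side, followed by algebraic cancellation. Since $h(x)$, $h(y)$, $h(z)$ are assumed finite and $h$ is differentiable at $y$ and $z$, all three Bregman distances $D_h(x,z)$, $D_h(x,y)$, and $D_h(y,z)$ are well-defined, so no boundary or domain issues arise and the gradients $\nabla h(y)$, $\nabla h(z)$ exist as required.

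First I would expand each term using \eqref{Bregman-distance}: writing $D_h(x,z) = h(x) - h(z) - \langle \nabla h(z), x-z\rangle$, then $D_h(x,y) = h(x) - h(y) - \langle \nabla h(y), x-y\rangle$, and finally $D_h(y,z) = h(y) - h(z) - \langle \nabla h(z), y-z\rangle$. Substituting these into $D_h(x,z) - D_h(x,y) - D_h(y,z)$ and collecting the scalar values $h(x)$, $h(y)$, $h(z)$, I expect all six of them to cancel exactly, leaving only the inner-product contributions to survive.

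The remaining step is to simplify the surviving inner products. Grouping the two terms that carry $\nabla h(z)$ gives $-\langle \nabla h(z), x-z\rangle + \langle \nabla h(z), y-z\rangle = \langle \nabla h(z), y-x\rangle = -\langle \nabla h(z), x-y\rangle$ by bilinearity, where the $z$ offsets cancel. Combining this with the single term $\langle \nabla h(y), x-y\rangle$ carrying $\nabla h(y)$ then yields exactly $\langle \nabla h(y) - \nabla h(z), x-y\rangle$, which matches the right-hand side and completes the proof.

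Since the argument is purely a bookkeeping computation, there is no substantive obstacle; the only care required is in tracking signs during the cancellation, particularly ensuring that the offset vectors multiplying $\nabla h(z)$ combine to $y-x$ rather than $x-y$. No convexity, essential smoothness, or Legendre properties of $h$ beyond differentiability at $y$ and $z$ are needed for this identity, which is why the lemma can be stated under such mild hypotheses.
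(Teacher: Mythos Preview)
Your proposal is correct and follows exactly the approach indicated in the paper, which simply states that the identity is verified by subtracting $D_h(x,y)$ and $D_h(y,z)$ from $D_h(x,z)$. Your expansion and sign-tracking are accurate, and your closing remark that only differentiability of $h$ at $y$ and $z$ is needed (not convexity or Legendre properties) is also correct.
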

\begin{proof}
The proof is straightforward as one can easily verify it by simply subtracting $D_h(y,z)$ and $D_h(x,y)$ from $D_h(x,z)$.
\end{proof}

%%%%%%%%%%%%%%%%%%%%%%%%%%%%%%%%%%
\subsection{Theorem 5.1 and Theorem 5.2}
In this section, we first establish the convergence of Bregman proximal point algorithm, {\em{i.e.}}, {\bf{Theorem 5.1}}, while our analysis is based on \citet{eckstein1993nonlinear,eckstein1998approximate,teboulle1992entropic}.
Further, we establish the convergence of inexact version Bregman proximal point algorithm, {\em{i.e.}}, {\bf{Theorem 5.2}}, in which the subproblem in each iteration is computed inexactly within finite number of sub-iterations.
%show that the problem above can converge to the optimal solution theoretically.

Note that here for simplicity we provide proof of $d(\bm{\Gamma},\bm{\Gamma}^{(t)})=D_h(\bm{\Gamma},\bm{\Gamma}^{(t)})$, i.e., the IPOT case. We can analogously prove it for $d(\{\bm{\Gamma}_k\},\{\bm{\Gamma}^{(t)}_k\})=\sum_{k=1}^K \lambda_k D_h(\bm{\Gamma}_k,\bm{\Gamma}^{(t)}_k)$, i.e., the IPOT-WB case, with very similar proof. This is because the latter is essentially just the weighted version of the former. 

Before proving both theorems, we propose several fundamental lemmas.
The first Lemma is the fundamental descent lemma, which is popularly used to analysis the convergence result of first-order methods.
%Next, we derive basic property of our target function and then show conclude for the convergence.
%%%%%%%%%%% fundamental descent lemma
\begin{lemma}
(Descent Lemma) Consider a closed proper convex function $f:X\rightarrow (-\infty,\infty]$ and for any $x\in X$ and $\beta^{(t)} > 0$, we have:
\begin{equation}\label{Descent_Lemma}
f(x^{(t+1)})\le f(x) + \beta^{(t)} \left[ D_h (x, x^{(t)}) - D_h (x,x^{(t+1)}) - D_h (x^{(t+1)},x^{(t)}) \right], \quad \forall x\in X.
\end{equation}
\end{lemma}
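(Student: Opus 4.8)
The plan is to derive the descent inequality (\ref{Descent_Lemma}) directly from the optimality condition for the proximal subproblem (\ref{Bregman-iteration}), using convexity of $f$ together with the three-point identity (\ref{three_point_lemma}). First I would write down the first-order optimality condition for the minimizer $x^{(t+1)}$ of the subproblem. Since $x^{(t+1)} = \arg\min_{x\in X}\{ f(x) + \beta^{(t)} D_h(x,x^{(t)})\}$, and since $D_h(\cdot,x^{(t)})$ is differentiable in its first argument with $\nabla_x D_h(x,x^{(t)}) = \nabla h(x) - \nabla h(x^{(t)})$, the optimality condition reads: there exists a subgradient $s \in \partial f(x^{(t+1)})$ such that
\begin{equation*}
\left\langle s + \beta^{(t)}\left[\nabla h(x^{(t+1)}) - \nabla h(x^{(t)})\right], x - x^{(t+1)} \right\rangle \ge 0, \quad \forall x \in X.
\end{equation*}

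Next I would invoke convexity of $f$ in the form $f(x^{(t+1)}) \le f(x) - \langle s, x - x^{(t+1)}\rangle$ for the same subgradient $s$. Combining this with the optimality condition to eliminate $\langle s, x - x^{(t+1)}\rangle$ yields
\begin{equation*}
f(x^{(t+1)}) \le f(x) + \beta^{(t)} \left\langle \nabla h(x^{(t+1)}) - \nabla h(x^{(t)}),\, x - x^{(t+1)} \right\rangle.
\end{equation*}
The remaining task is to recognize the inner-product term as a combination of Bregman distances. Applying the three-point lemma (\ref{three_point_lemma}) with the substitution $y \mapsto x^{(t+1)}$, $z \mapsto x^{(t)}$ gives exactly
\begin{equation*}
\left\langle \nabla h(x^{(t+1)}) - \nabla h(x^{(t)}),\, x - x^{(t+1)} \right\rangle = D_h(x,x^{(t)}) - D_h(x,x^{(t+1)}) - D_h(x^{(t+1)},x^{(t)}),
\end{equation*}
and substituting this into the previous inequality produces (\ref{Descent_Lemma}).

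I expect the main subtlety to be justifying the optimality condition cleanly when $X$ carries the constraint set: I would encode $X$ via the indicator $\iota_X$ so that the subproblem is unconstrained minimization of $f + \iota_X + \beta^{(t)} D_h(\cdot,x^{(t)})$, and the stationarity condition becomes $0 \in \partial f(x^{(t+1)}) + \partial \iota_X(x^{(t+1)}) + \beta^{(t)}[\nabla h(x^{(t+1)}) - \nabla h(x^{(t)})]$, with the subgradient of $\iota_X$ supplying precisely the variational inequality above. One must also ensure $x^{(t+1)} \in \operatorname{int}\operatorname{dom} h$ so that $\nabla h(x^{(t+1)})$ is well defined and the three-point lemma applies; this is guaranteed by the essential smoothness of the Legendre function $h$, which forces the minimizer to lie in the interior of the domain. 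Beyond this bookkeeping, the argument is a routine combination of convex subgradient inequalities and the algebraic identity, so no single step should present a serious obstacle.
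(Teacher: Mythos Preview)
Your proposal is correct and follows essentially the same route as the paper: write the first-order optimality condition for the proximal subproblem, use convexity of $f$ to replace the subgradient term by $f(x)-f(x^{(t+1)})$, and then apply the three-point identity (\ref{three_point_lemma}) to convert the inner product into the Bregman-distance combination. If anything, your version is slightly more careful than the paper's, since you work with a subgradient $s\in\partial f(x^{(t+1)})$ rather than assuming $\nabla f$ exists, and you explicitly address the constraint via $\iota_X$ and the interior-of-domain issue for $h$.
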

\begin{proof}
The optimality condition of (\ref{Bregman-iteration}) can be written as
$$
\left( x - x^{(t+1)} \right)^T \left[ \nabla f(x^{(t+1)}) + \beta^{(t)} \left( \nabla h (x^{(t+1)}) - \nabla h (x^{(t)}) \right) \right]\ge 0,\quad \forall x\in X.
$$Then with the convexity of $f$, we obtain
\begin{equation}\label{Basic-inequality-1}
f(x) - f(x^{(t+1)}) + \beta^{(t)} \left( x - x^{(t+1)} \right)^T \left( \nabla h (x^{(t+1)}) - \nabla h (x^{(t)}) \right) \ge 0.
\end{equation}With (\ref{three_point_lemma}) it follows that
$$
\left( x - x^{(t+1)} \right)^T \left( \nabla h (x^{(t+1)}) - \nabla h (x^{(t)}) \right) = D_h (x, x^{(t)}) - D_h (x,x^{(t+1)}) - D_h (x^{(t+1)},x^{(t)}).
$$Substitute the above equation into (\ref{Basic-inequality-1}), we have
$$
f(x^{(t+1)})\le f(x) + \beta^{(t)} \left[ D_h (x, x^{(t)}) - D_h (x,x^{(t+1)}) - D_h (x^{(t+1)},x^{(t)}) \right], \quad \forall x\in X.
$$
\end{proof}

%%%%%%%%% Theorem 3.1
Next, we prove the convergence result in {\bf{Theorem 5.1}}.

\medskip
\noindent {\bf{Theorem 5.1}} {\em{
Let $\{x^{(t)}\}$ be the sequence generated by the general Bregman proximal point algorithm with iteration (\ref{Bregman-iteration}) where $f$ is assumed to be continuous and convex.
Further assume that $f^* = \min f(x) > -\infty$.
Then we have that $\{f(x^{(t)})\}$ is non-increasing, and $f(x^{(t)})\rightarrow f^*$. Further assume there exists $\eta$, s.t.
\begin{equation}
\label{assume1}
f^* +\eta d(x) \leq f(x), \quad \forall x \in X,
\end{equation}
The algorithm has linear convergence.
}}
\begin{proof}
\begin{enumerate}
\item First, we prove the sufficient decrease property:
\begin{equation}\label{Sufficient-Decrease}
f(x^{(t+1)}) \le f(x^{(t)}) - \beta^{(t)}(1 + \alpha(h)) D_h (x^{(t+1)},x^{(t)}).
\end{equation}
Let $x=x^{(t)}$ in (\ref{Descent_Lemma}), we obtain
\begin{eqnarray}
f(x^{(t+1)})&\le& f(x^{(t)}) - \beta^{(t)} \left[ D_h(x^{(t)},x^{(t+1)}) + D_h (x^{(t+1)},x^{(t)}) \right]\nonumber\\
&\le& f(x^{(t)}) - \beta^{(t)}(1 + \alpha(h)) D_h (x^{(t+1)},x^{(t)}).\nonumber
\end{eqnarray}
With the sufficient decrease property, it is obvious that $\{f(x^{(t)})\}$ is non-decreasing.
\item Summing (\ref{Sufficient-Decrease}) from $i = 0$ to $i=t-1$ and for simplicity assuming $\beta^{(t)} = \beta$, we have
$$
\sum_{i=0}^{k-1} \left[ \frac{1}{\beta^{(t)}} \left( f(x^{(i+1)}) - f(x^{(i)}) \right) \right] \le - \left[ 1+\alpha(h) \right]\sum_{i=0}^{k-1} D_h (x^{(i+1)},x^{(i)})$$
$$\Rightarrow \qquad \sum_{i=0}^{\infty} D_h (x^{(i+1)},x^{(i)}) < \frac{1}{\beta(1+\alpha(h))} f(x^{(0)}) < \infty,
$$which indicates that $D_h (x^{(i+1)},x^{(i)}) \rightarrow 0$.
Then summing (\ref{Descent_Lemma}) from $i=0$ to $i=t-1$, we have
$$
k\left(f(x^{(t)}) - f(x)\right) \le \sum_{i=0}^{t-1} \left( f(x^{(i+1)}) - f(x) \right) \le \beta D_h (x,x^{(0)}) < \infty,\quad \forall x\in X.
$$Let $t\rightarrow \infty$, we have $\lim_{t\rightarrow \infty} f(x^{(t)}) \le f(x)$ for every $x$, as a result we have $\lim_{k\rightarrow \infty} f(x^{(t)}) = f^*$.
\item Finally, we prove the convergence rate is linear. Assume $x^* = \argmin_{x} f(x)$ is the unique optimal solution. Denote $d(x)= D_h(x^*,x)$. Let also $\beta^{(t)} = \beta$, we will prove
\begin{equation}
\label{rate}
 \frac{d(x^{(t+1)})}{d(x^{(t)})} \leq \frac{1}{1+\frac{\eta}{\beta}}
\end{equation}

Replace $x$ with $x^*$ in inequality (\ref{Descent_Lemma}), we have
\begin{equation}
\label{inqua1}
f(x^{(t+1)})\le f^* + \beta \left[ d (x^{(t)}) - d(x^{(t+1)}) - D_h (x^{(t+1)},x^{(t)}) \right].
\end{equation}
Using assumption \ref{assume1}, we have
\begin{equation}
\label{inqua2}
f^*+\eta d( x^{(t+1)})\leq f(x^{(t+1)})
\end{equation}
Sum \ref{inqua1} and \ref{inqua2} up, we have
\begin{align*}
\frac{\eta}{\beta}  d( x^{(t+1)}) & \leq  d (x^{(t)}) - d(x^{(t+1)}) - D_h (x^{(t+1)},x^{(t)}) \\
& \leq  d (x^{(t)}) - d(x^{(t+1)})
\end{align*}
Therefore,
\[
\frac{d(x^{(t+1)})}{d(x^{(t)})} \leq \frac{1}{1+\frac{\eta}{\beta}}
\]
Therefore, we have a linear convergence in Bregman distance sense. 
\end{enumerate}

\end{proof}

Assumption (\ref{assume1}) does not always hold when $f$ is linear. In our specific case, $x$ is bounded in $[0,1]^{m\times n}$. More rigorously, we can prove the following lemma.

\begin{lemma}
Assume $\mathcal{X}$ is a bounded polyhedron, $x^*$ is unique, $d(x)$ is an arbitrary nonnegative convex function with $d(x*)=0$. If $f$ is linear, then there exist $\eta$, s.t. $f^* + \eta d(x) \leq f(x)$.

\end{lemma}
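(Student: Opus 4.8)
The plan is to exploit a simple duality of convexity: subtracting a convex function from an affine one produces a \emph{concave} function, and a concave function on a polytope is minimized at a vertex. Write $g(x) := f(x) - f^* = \langle c, x - x^*\rangle$, which is affine, nonnegative on $\mathcal{X}$, and vanishes at $x^*$ (using that $f$ is linear and $x^*$ is a minimizer). The claim $f^* + \eta\, d(x) \le f(x)$ is literally the statement $\psi(x) := g(x) - \eta\, d(x) \ge 0$ on $\mathcal{X}$, and since $g$ is affine while $d$ is convex, $\psi$ is concave for every $\eta \ge 0$. So the whole problem reduces to choosing $\eta > 0$ making a concave function nonnegative.

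First I would record that $\mathcal{X}$, being a \emph{bounded} polyhedron, is a polytope with finitely many vertices $v_1,\dots,v_N$, and that every $x\in\mathcal{X}$ is a convex combination $x=\sum_i \theta_i v_i$. Concavity of $\psi$ then gives $\psi(x)\ge \sum_i \theta_i \psi(v_i)\ge \min_i \psi(v_i)$, so $\psi\ge 0$ throughout $\mathcal{X}$ as soon as $\psi(v_i)\ge 0$ at each vertex. This collapses the continuum of inequalities into the finite system $g(v_i)\ge \eta\, d(v_i)$, $i=1,\dots,N$.

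Next I would fix $\eta$. Because $x^*$ is the \emph{unique} minimizer of the linear $f$ over $\mathcal{X}$, it is itself a vertex, and $g(v)>0$ for every vertex $v\ne x^*$, while $g(x^*)=0=d(x^*)$. I would set $\eta := \min\{\, g(v_i)/d(v_i) : d(v_i)>0 \,\}$, a minimum over finitely many strictly positive ratios and hence strictly positive (if no vertex has $d(v_i)>0$, then $d\equiv 0$ by convexity and any $\eta$ works). The vertex inequalities then follow case by case: at $x^*$ both sides vanish; at a vertex with $d(v_i)=0$ one has $g(v_i)\ge 0 = \eta\, d(v_i)$; and at a vertex with $d(v_i)>0$ the defining minimum gives $g(v_i)\ge \eta\, d(v_i)$ immediately. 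Combined with the vertex reduction this yields $\psi\ge 0$, i.e. the lemma.

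The main obstacle is guaranteeing that $\eta$ can be taken \emph{strictly} positive, which is exactly what is needed so the contraction factor $1/(1+\eta/\beta)$ in Theorem 5.1 lies below one. Positivity rests on two ingredients I would stress: \emph{uniqueness} of $x^*$, which forces $g(v)>0$ at all other vertices and prevents any ratio from degenerating to zero; and \emph{boundedness} of $\mathcal{X}$, which yields only finitely many vertices so the minimizing ratio is attained rather than an infimum that might be zero. I would also flag the tacit hypothesis that $d$ is finite-valued on $\mathcal{X}$, so that $d(v_i)<\infty$ at each vertex: for the entropic choice $d(x)=D_h(x^*,x)$ this deserves a remark, since it can be $+\infty$ at boundary vertices where coordinates vanish while $x^*$ does not, in which case the statement must be read over the effective domain (where the iterates in fact live) rather than the full closed polytope.
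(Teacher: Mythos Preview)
Your proposal is correct and follows essentially the same approach as the paper: both reduce the inequality to the finitely many vertices of the bounded polyhedron, using that $x^*$ is itself a vertex and that $f(v)-f^*>0$ at every other vertex by uniqueness. The paper carries this out by writing $x=\sum_i \lambda_i e_i$, applying Jensen's inequality to $d$, and choosing $\eta=\delta/d_{\max}$ with $\delta=\min_{i>0}(f(e_i)-f^*)$ and $d_{\max}=\max_{i>0} d(e_i)$; your concavity-of-$\psi$ framing is the same reduction in different clothing, and your choice $\eta=\min\{g(v_i)/d(v_i):d(v_i)>0\}$ is in fact a little sharper (and your explicit handling of the degenerate case $d\equiv 0$ and the remark about $D_h$ being $+\infty$ at boundary vertices are points the paper glosses over).
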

\begin{proof}
Since $\mathcal{X}$ is a bounded polyhedron, any $x\in \mathcal{X}$ can be expressed as $x=\sum_{i=0}^n \lambda_i e_i$, where $e_i$ is the vertices of $\mathcal{X}$, $n$ is finite, and $\sum \lambda_i=1$. Also $f$ is linear, so $f(x)=\sum_{i=0}^n \lambda_i f(e_i)$

Since $f$ is linear, $\mathcal{X}$ is polyhedral and $x^*$ is unique, $x^*$ is a vertex of $X$. Denote $e_0=x^*$.

Denote $\delta=\min_{i>0} f(e_i)-f^*$, then $\delta>0$, or else $x^*$ is not unique. Denote $d_{max}=\max_{i>0} d(e_i)$. Take
\[
\eta=\delta/d_{max},
\]
we have
\begin{align*}
	f^*+\eta d(x)&= f^*+\eta d(\sum_{i=0}^n \lambda_i e_i)\\
	\text{ (Jensen's Inequality)}& \leq f^*+\eta\sum_{i=0}^n \lambda_i d(e_i)\\
	(d(e_0)=0)\quad& \leq f^*+(1-\lambda_0)\eta d_{max}\\
	& = f^*+(1-\lambda_0)\delta\\
	& = \sum_{i=1}^n \lambda_i (f^*+\delta)+\lambda_0 f^*\\
	& \leq \sum_{i=0}^n \lambda_i f(e_i)\\
	& = f(x).
\end{align*}
\end{proof}

For more general cases, if $x^*$ is not unique, we can divide the vertices as “optimal vertices” and the rest vertices, instead of $e_0$ and the rest as above, the conclusion can be proved analogously. Furthermore, if $\mathcal{X}$ is not a polyhedron, as long as $\mathcal{X}$ is bounded, we can always prove the conclusion in a polyhedron $\mathcal{A}$ s.t. $\mathcal{X}\in \mathcal{A}$ and $x*$ is also the optimal solution of $\min_{x \in \mathcal{A}} f(x)$. Proof of more general cases can be found in \citet{robinson1981some} (This paper points out some fairly strong continuity properties that polyhedral multifunctions satisfy).

Inequality (\ref{rate}) shows how the convergence rate is linked to $\beta$. This is the reason we claim in Section 4.1 that a smaller $\beta$ would lead to quicker convergence in exact case.

From above, we showed that the general Bregman proximal point algorithm with constant step size can guarantee convergence to the optimal solution $f^*$, and has linear convergence rate with some assumptions. Further, we prove the convergence result for the general Bregman proximal point algorithm with inexact scheme in {\bf{Theorem 5.2}}.

\medskip
\noindent {\bf{Theorem 5.2}} {\em{Let $\{ x^{(t)} \}$ be the sequence generated by the general Bregman proximal point algorithm with inexact scheme (i.e., finite number of inner iterations are employed).
Define an error sequence $\{ e^{(t)} \}$ where
\begin{equation}\label{Inexact-scheme}
e^{(t+1)} \in \beta^{(t)} \left[ \nabla f(x^{(t+1)} ) +  \partial \iota_{X} (x^{(t+1)} ) \right] +  \left[ \nabla h(x^{(t+1)} ) - \nabla h(x^{(t)} ) \right],
\end{equation}where $\iota_{X}$ is the indicator function of set $X$. If the sequence $\{ e^{(t)} \}$ satisfies $\sum_{k=1}^{\infty} \| e^{(t)} \| < \infty$ and $\sum_{k=1}^{\infty} \langle e^{(t)}, x^{(t)} \rangle$ exists and is finite, then $\{ x^{(t)} \}$ converges to $x^{\infty}$ with $f(x^{\infty}) = f^*$.
If the sequence $\{ e^{(t)} \}$ satisfies that exist $\rho\in (0,1)$ such that$ \| e^{(t)} \| \le \rho^t$, $ \langle e^{(t)}, x^{(t)} \rangle \le \rho^t$ and with assumption (\ref{assume1}), then $\{ x^{(t)} \}$ converges linearly.}}

\medskip
\noindent {\bf{Remark}}: If exact minimization is guaranteed in each iteration, the sequence $\{ x^{(t)} \}$ will satisfy that 
$$
0 \in  \beta^{(t)} \left[ \nabla f(x^{(t+1)} ) +  \partial \iota_{X} (x^{(t+1)} )\right] + \frac{1}{\beta^{(t)}} \left[ \nabla h(x^{(t+1)} ) - \nabla h(x^{(t)} ) \right].
$$As a result, with enough inner iteration, the guaranteed $e^{(t)}$ will goes to zero.

\medskip
\begin{proof}
This theorem is extended from \cite[Theorem 1]{eckstein1998approximate}, and we propose a brief proof here.
The proof contains the following four steps:
\begin{enumerate}
\item We have for all $k\ge 0$, through the three point lemma
$$
D_h (x,x^{(t+1)}) = D_h (x,x^{(t)}) - D_h (x^{(t+1)},x^{(t)}) - \langle \nabla h(x^{(t)}) - \nabla h(x^{(t+1)}), x^{(t+1)}-x \rangle,
$$which indicates
$$
D_h (x,x^{(t+1)}) = D_h (x,x^{(t)}) - D_h (x^{(t+1)},x^{(t)}) - \langle \nabla h (x^{(t)}) - \nabla h (x^{(t+1)}) + e^{(t+1)}, x^{(t+1)} - x \rangle + \langle e^{(t+1)}, x^{(t+1)} - x \rangle.
$$
Since $ \frac{1}{\beta^{(t)}}  \left[ e^{(t+1)} + \nabla h(x^{(t)} ) - \nabla h(x^{(t+1)} ) \right] \in \nabla f(x^{(t+1)} ) +  \partial \iota_{X} (x^{(t+1)} )$ and $0\in  \nabla f(x^{*}) +  \partial \iota_{X} (x^{*})$ if $x^*$ be the optimal solution, we have
\begin{eqnarray}
&&\langle \nabla h (x^{(t)}) - \nabla h (x^{(t+1)}) + e^{(t+1)}, x^{(t+1)} - x^{*} \rangle \nonumber\\
&=&\beta^{(t)}  \langle \left[ \frac{1}{\beta^{(t)}}  \left( \nabla h (x^{(t)}) - \nabla h (x^{(t+1)}) + e^{(t+1)} \right) \right] - 0, x^{(t+1)} - x^{*} \rangle \ge 0,\nonumber
\end{eqnarray}because $\nabla f + \partial \iota_{X}$ is monotone ($f+ \iota_{X}$ is convex).
Further we have
$$
D_h (x^*,x^{(t+1)}) \le D_h (x^*,x^{(t)}) - D_h (x^{(t+1)},x^{(t)}) +  \langle e^{(t+1)}, x^{(t+1)} - x^* \rangle.
$$
\item Summing the above inequality from $i=0$ to $i=t-1$, we have
$$
D_h (x^*,x^{(t)}) \le D_h (x^*,x^{(0)}) - \sum_{i=0}^{t-1} D_h (x^{(i+1)},x^{(i)}) +  \sum_{i=0}^{t-1} \langle e^{(i+1)}, x^{(i+1)} - x^* \rangle.
$$Since $\sum_{t=1}^{\infty} \| e^{(t)} \| < \infty$ and $\sum_{t=1}^{\infty} \langle e^{(t)}, x^{(t)} \rangle$ exists and is finite, we guarantee that
$$
\bar{E}(x^*) = \sup_{t\ge 0} \left\{ \sum_{i=0}^{t-1} \langle e^{(i+1)}, x^{(i+1)} - x^* \rangle \right\} < \infty.
$$Together with $D_h (x^{(i+1)},x^{(i)}) > 0$, we have
$$
D_h (x^*,x^{(t)}) \le D_h (x^*,x^{(0)}) + \bar{E}(x^*) < \infty,
$$which indicates
$$
0 \le \sum_{i=0}^{\infty} D_h (x^{(i+1)},x^{(i)}) < D_h (x^*,x^{(0)}) + \bar{E}(x^*) < \infty,
$$and hence $D_h (x^{(i+1)},x^{(i)}) \rightarrow 0$.
\item Based on the above two items, we know that the sequence $\{ x^{(t)} \}$ must be bounded and has at least one limit point $x^{\infty}$.
The most delicate part of the proof is to establish that $0\in \nabla f(x^{\infty}) + \partial \iota_{X} (x^{\infty})$.
Let $T = \nabla f + \partial \iota_{X}$, then $T$ denotes the subdifferential mapping of a closed proper convex function $f+ \iota_{X}$ ($f$ is a linear function and $X$ is a closed convex set).
Let $\{ t_j \}$ be the sub-sequence such that $x^{t_j}\rightarrow x^{\infty}$.
Because $x^{t_j}\in X$ and $X$ is a closed convex set, we know $x^{\infty}\in X$.
We know that $D_h (x^*,x^{(t+1)}) \le D_h (x^*,x^{(t)}) + \langle e^{(t+1)},x^{(t+1)} - x^* \rangle$ and $\sum_{k=0}^{\infty} \langle e^{(t+1)},x^{(t+1)} - x^* \rangle$ exists and is finite. From \cite[Section 2.2]{polyak1987introduction}, we guarantee that $\{ D_h (x^*,x^{(t)}) \}$ converges to $0 \le d(x^*) < \infty$.
Define $y^{(t+1)} := \lambda_k \left( \nabla h (x^{(t)}) - \nabla h (x^{(t+1)}) + e^{(t+1)} \right) $, we have
$$
\lambda_k \langle y^{(t+1)}, x^{(t+1)} - x^* \rangle = D_h (x^*,x^{(t)}) - D_h(x^*,x^{(t+1)}) - D_h (x^{(t+1)},x^{(t)}) + \langle e^{(t+1)}, x^{(t+1)} - x^* \rangle.
$$By taking the limit of both sides and $\lambda_k = \lambda > 0$, we obtain that
$$
\langle y^{(t+1)}, x^{(t+1)} - x^* \rangle \rightarrow 0.
$$For the reason that $y^{k_j + 1}$ is a subgradient of $f+\iota_{X}$ at $x^{k_j + 1}$, we have
$$
f(x^*) \ge f(x^{k_j +1 }) + \langle y^{k_j +1}, x^* - x^{k_j +1} \rangle, \quad x^*\in X, x^{k_j +1}\in X.
$$Further let $j\rightarrow \infty$ and using $f$ is lower semicontinuous, $\langle y^{(t+1)}, x^{(t+1)} - x^* \rangle \rightarrow 0$, we obtain
$$
f(x^{*})\ge f(x^{\infty}),\quad x^{\infty}\in X
$$which implies that $0\in \nabla f(x^{\infty}) + \iota_{X} (x^{\infty})$.
\item Recall the inexact scheme (\ref{Inexact-scheme}), we can equivalently guarantee that
$$
(x - x^{(t+1)})^T \left\{ \beta^{(t)} \nabla f(x^{(t+1)}) + \left[ \nabla h(x^{(t+1)}) - \nabla h(x^{(t)}) \right] - e^{(t+1)} \right\}\ge 0,\ \forall x\in X.
$$Together the convexity of $f$ and the three point lemma, we obtain
$$
f(x^{(t+1)}) \le f(x) + \frac{1}{\beta^{(t)}} \left[ D_{h} (x,x^{(t)}) - D_{h} (x,x^{(t+1)}) - D_{h} (x^{(t+1)},x^{(t)}) - (x - x^{(t+1)})^T e^{(t+1)} \right].
$$Let $x = x^*$ in the above inequality and recall the assumption (\ref{assume1}), {\em{i.e.}},
$$
f(x) - f(x^*) \ge \eta d(x),
$$we have with $\beta^{(t)} = \beta$
\begin{eqnarray}
\eta d(x^{(t+1)}) &\le& \frac{1}{\beta} \left[ d(x^{(t)}) - d(x^{(t+1)}) \right] + \frac{1}{\beta} \left( (x^{(t+1)} - x^*)^T e^{(t+1)} \right)\nonumber\\
&\le& \frac{1}{\beta} \left[ d(x^{(t)}) - d(x^{(t+1)}) \right] + \frac{1}{\beta} \left( C\|e^{(t+1)}\| + \langle x^{(t+1)},e^{(t+1)} \rangle \right),\nonumber
\end{eqnarray}where $C:=\sup_{x\in X^*} \{ \|x\| \}$. The second inequality is obtained through triangle inequality. Then
$$
d^{(t+1)} \le \mu d^{(t)} + \mu \left( C\|e^{(t+1)}\| + \langle x^{(t+1)},e^{(t+1)} \rangle \right),
$$where $\mu = \frac{1}{1 + \beta \eta} < 1$.
With our assumptions and according to Theorem 2 and Corollary 2 in \citet{So2017non}, we guarantee the generated sequence converges linearly in the order of ${\cal{O}}\left( c^t \right)$, where $c = \sqrt{\frac{1+ \max\{ \mu,\rho \}}{2}}\in (0,1)$.
\end{enumerate}

Based on the above four items, we guarantee the convergence results in this theorem.
\end{proof}

\end{document}